\documentclass[10pt]{article} 
\usepackage[accepted]{tmlr}

\usepackage{microtype}
\usepackage{graphicx}
\usepackage{subfigure}
\usepackage{booktabs} 
\newcommand{\comment}[1]{}
\usepackage{hyperref}
\usepackage{enumitem}
\setlist[itemize]{noitemsep, topsep=0pt}




\usepackage{amsmath}
\usepackage{amssymb}
\usepackage{mathtools}
\usepackage{amsthm}
\usepackage{mdframed}
\usepackage[capitalize,noabbrev]{cleveref}


\theoremstyle{plain}
\newtheorem{theorem}{Theorem}[section]

\newtheorem{lemma}[theorem]{Lemma}

\newmdtheoremenv{framedtheorem}[theorem]{Theorem}
\newmdtheoremenv{framedcorollary}[theorem]{Corollary}
\newmdtheoremenv{framedlemma}[theorem]{Lemma}
\newmdtheoremenv{framedexample}[theorem]{Example}
\newmdtheoremenv{framedassumption}[theorem]{Assumption}
\newmdtheoremenv{framedproposition}[theorem]{Proposition}

\theoremstyle{definition}

\newtheorem{assumption}[theorem]{Assumption}
\theoremstyle{remark}

\usepackage[textsize=tiny]{todonotes}



\usepackage{amsfonts,bm}
\newcommand{\norm}[1]{\|#1\|_2}
















\def\1{\bm{1}}








\def\va{{\bm{a}}}
\def\vb{{\bm{b}}}

\def\vd{{\bm{d}}}
\def\ve{{\bm{e}}}

\def\vg{{\bm{g}}}

\def\vm{{\bm{m}}}

\def\vx{{\bm{x}}}
\def\vy{{\bm{y}}}
\def\vz{{\bm{z}}}



\def\mD{{\bm{D}}}

\DeclareMathAlphabet{\mathsfit}{\encodingdefault}{\sfdefault}{m}{sl}
\SetMathAlphabet{\mathsfit}{bold}{\encodingdefault}{\sfdefault}{bx}{n}











\newcommand{\E}{\mathbb{E}}

\newcommand{\R}{\mathbb{R}}



\usepackage{hyperref}
\usepackage{url}
\usepackage{enumitem}
\usepackage[font=small]{caption}

\usepackage{amsthm}
\usepackage{algorithm}
\usepackage{graphicx}

\let\classAND\AND
\let\AND\relax
\usepackage{algorithmic}

\let\AND\classAND
\AtBeginEnvironment{algorithmic}{\let\AND\algoAND}

\title{\centering Optimization with Access to Auxiliary Information}


\author{\name El Mahdi Chayti \email el-mahdi.chayti@epfl.ch \\
      \addr
      EPFL
      \AND
      \name 
Sai Praneeth Karimireddy \email sp.karimireddy@berkeley.edu \\
      \addr UC Berkeley
      }



\begin{document}

\maketitle

\begin{abstract}
We investigate the fundamental optimization question of minimizing a \emph{target} function $f(\vx)$, whose gradients are expensive to compute or have limited availability, given access to some \emph{auxiliary} side function $h(\vx)$ whose gradients are cheap or more available. This formulation captures many settings of practical relevance, such as i) re-using batches in SGD, ii) transfer learning, iii) federated learning, iv) training with compressed models/dropout, Et cetera. We propose two generic new algorithms that apply in all these settings; we also prove that we can benefit from this framework under the Hessian similarity assumption between the target and side information. A benefit is obtained when this similarity measure is small; we also show a potential benefit from stochasticity when the auxiliary noise is correlated with that of the target function. 

\end{abstract}
\section{Introduction}
\paragraph{Motivation.} Stochastic optimization methods such as SGD \citep{robbins1951stochastic} or Adam \citep{kingma2014adam} are arguably at the core of the success of large-scale machine learning \citep{lecun2015deep,schmidhuber2015deep}. This success has led to significant (perhaps even excessive) research efforts dedicated to designing new variants of these methods \citep{schmidt2020descending}. In all these methods, massive datasets are collected centrally on a server, and immense parallel computational resources of a data center are leveraged to perform training \citep{goyal2017accurate,brown2020language}. Meanwhile, modern machine learning is moving away from this centralized training setup with new paradigms emerging, such as i) distributed/federated learning, ii) semi-supervised learning, iii) personalized/multi-task learning, iv) model compression, Et cetera. Relatively little attention has been devoted to these more practical settings from the optimization community. 
In this work, we focus on extending the framework and tools of stochastic optimization to bear on these novel problems.

At the heart of these newly emergent training paradigms lies the following fundamental optimization question: We want to minimize a target loss function $f(\vx)$, but computing its stochastic gradients is either very expensive or unreliable due to limited data. However, we assume having access to some auxiliary loss function $h(\vx)$ whose stochastic gradient computation is relatively cheaper or more available. For example, in transfer learning, $f(\vx)$ would represent the downstream task we care about and for which we have very little data available, whereas $h(\vx)$ would be the pretraining task for which we have plenty of data \citep{yosinski2014transferable}. Similarly, in semi-supervised learning, $f(\vx)$ would represent the loss over our clean labeled data, whereas $h(\vx)$ represents the loss over unlabeled or noisily labeled data \citep{chapelle2009semi}. Our challenge is the following question:
    \begin{center}
        How can we leverage an auxiliary $h(\vx)$ to speed up the optimization of our target loss function $f(\vx)$?
    \end{center}

Of course, if $f(\vx)$ and $h(\vx)$ are entirely unrelated, our task is impossible, and we cannot hope for any speedup over simply running standard stochastic optimization methods (e.g., SGD) on $f(\vx)$. Thus, the additional question before us is to define and take advantage of useful similarity measures between $f(\vx)$ and $h(\vx)$. In this work, we will mainly consider Hessian similarity (defined later in Assumption~\ref{A3}) and leave devising more practical similarity measures as a future direction.

\textbf{Contributions.} The main results in this work are
\begin{itemize}[nosep]
    \item We formulate the following as stochastic optimization with auxiliary information: i)Re-using batches in SGD, ii) Semi-supervised learning, iii) transfer learning, iv) Federated Learning, v) personalized learning, and vi) training with sparse models.
    \item We show a useful and simple trick (Eq\ref{localupdate}) to construct biased gradients using gradients from an auxiliary function.
    \item Based on the above trick, we design a biased gradient estimator of $f(\vx)$, which reuses stochastic gradients of $f(\vx)$ and combines it with gradients of $h(\vx)$.
    \item We then use this estimator to develop algorithms for minimizing smooth non-convex functions. Our methods improve upon known optimal rates that don't use any side information.
  \end{itemize}
\textbf{Related work.} Optimizing one function $f$ while accessing another function $h$ (or its gradients) is an important idea that has only been considered in specific cases in machine learning and optimization communities. To the best of our knowledge, this problem has never been regarded in all of its generality before this time. For this reason, we are limited to only citing works that used this idea in particular cases. Lately, masked training of neural networks was considered, for example, in \citep{ACDC, MaskedTraining}; this approach is a special case of our framework, where the auxiliary information is given by the sub-network (or mask). In distributed optimization, \citep{DANE} define sub-problems based on available local information; the main problem with this approach is that the defined sub-problems need to be solved precisely in theory and to high precision in practice. In Federated Learning \citep{FL, MacMahan2017, Mohri2019}, the local functions (constructed using local datasets) can be seen as side information. Applying our framework recuperates an algorithm close to MiMe \citep{MiMe}. In personalization, \citet{LinSpeedup} study the collaborative personalization problem where one user optimizes its loss by using gradients from other available users (that are willing to collaborate); again, these collaborators can be seen as side information, one drawback of the approach in \citep{LinSpeedup} is that they need the same amount of work from the main function and the helpers, in our case we alleviate this by using the helpers more.\\
There is also auxiliary learning \citep{AuxTMDL, AuxLinImplicit,AdaptTaskReweighting} that is very similar to what we are proposing in this work. Auxiliary learning also has the goal of learning one given task using helper tasks, however, all these works come without any theoretical convergence guarantees, furthermore, our approach is more general.\\
The proposed framework is general enough to include all the above problems and more. More importantly, we don't explicitly make assumptions on how the target function $f$ is related to the auxiliary side information $h$ (potentially a set of functions) like in Distributed optimization or Federated learning where we assume $f$ is the average of the side-information $h$. Also, it is not needed to solve the local problems precisely as expected by DANE \citep{DANE}. 
\section{General Framework}
Our main goal is to solve the following optimization problem: \begin{equation}
    \label{mainProblem} \min_{\vx\in \R^d} f(\vx)\, ,
\end{equation}
and we suppose that we have access to an auxiliary function $h(\vx)$ that is related to $f$ in a sense that we don't specify at this level.

Specifically, we are interested in the stochastic optimization framework. We assume that the target function is of the form $f(\vx):= \E_{\xi_f}\big[ f(\vx; \xi_f)\big]$ over $\vx\in\R^d$ while the auxiliary function has the form $h(\vx):= \E_{\xi_h}\big[ h(\vx; \xi_h)\big]$ defined over the same parameter space. We will refer to these two functions simply as $f$ and $h$ and stress that they should not be confused with $f(\cdot;\zeta_f)$ and $h(\cdot;\zeta_h)$.
It is evident that if both functions $f$ and $h$ are unrelated, we can't hope to benefit from the auxiliary information $h$. Hence, we need to assume some similarity between $f$ and $h$. In our case, we propose to use the hessian similarity (defined in assumption \ref{A3}). 

Many optimization algorithms can be framed as sequential schemes that, starting from an estimate $\vx$ of the minimizer, compute the next (hopefully better estimate) $\vx^+$ by solving a potentially simpler problem 
\begin{equation}
    \vx^+ \in arg\min_{\vz\in\R^d }\{\hat{f}(\vz;\vx) + \lambda R(\vz;\vx)\}\,,
\end{equation}
where $\hat{f}(\cdot;\vx)$ is an approximation of $f$ around the current state $\vx$, $R(\vz;\vx)$ is a regularization function that measures the quality of the approximation and $\lambda$ is a parameter that trades off the two terms. 

For example, the gradient descent algorithm is obtained by choosing $\hat{f}(\vz;\vx)= f(\vx) + \nabla f(\vx)^\top (\vz - \vx)$,  $R(\vz;\vx)=\frac{1}{2}\norm{\vz - \vx}^2$ and $\lambda = \frac{1}{\eta}$ where $\eta$ is the stepsize. Mirror descent uses the same approximation $\hat{f}$ but a different regularizer $R(\vz;\vx) = \mathcal{D}_\phi(\vz;\vx)$ where $\mathcal{D}_\phi$ is the Bregman divergence of a certain strongly-convex function $\phi$.

We take inspiration from this approach in this work. However, we would like to take advantage of the existence of the auxiliary function $h$. We will mainly focus on first-order approximations of $f$ throughout this work; we will also fix the regularization function $R(\vz;\vx)=\frac{1}{2}\norm{\vz - \vx}^2$, we note that our ideas can be easily adapted to other choices of $R$ and more involved approximations of $f$.

For any function $h$, we can always write $f$ as $$f(\vz) :=  \underbrace{h(\vz)}_{\text{cheap}} \; + \; \underbrace{f(\vz) - h(\vz)}_{\text{expensive}}\,,$$
we call the first term ``cheap'', but this should not be understood strictly; it can also, for example, mean more available.

A very straightforward approach is to use $f$ whenever it is available and use $h$ as its proxy whenever it is not; we call this approach \textbf{the naive approach}. This approach is equivalent to simply ignoring (in other words, using a zeroth-order approximation of) the ``expensive'' part.

A more involved strategy is to approximate the ``expensive'' part $f(\vz) - h(\vz)$ as well but not as much as the ``cheap'' part $h(\vz)$. We can do this by approximating $f(\vz)-h(\vz)$ around $\vx$ ( a global state, or a snapshot, the idea is that it is the state of $f$) and approximating $h(\vz)$ around $\vy$ (a local state in the sense that it is updated by $h$). Doing this, we get the following update rule:

$$\vy^+ \in arg\min_{\vz\in\R^d }\{\hat{f}(\vz;\vy,\vx) + \frac{1}{2\eta}\norm{\vz-\vy}^2\}\,,$$
where $\hat{f}(\vz;\vy,\vx) := h(\vy) + f(\vx) - h(\vx) + \nabla h(\vy)^\top(\vx - \vy) + (\nabla h(\vy) - \nabla h(\vx) + \nabla f(\vx))^\top (\vz - \vx)$.

This is equivalent to 
\begin{equation}\label{localupdate}
    \vy^+ = \vy - \eta (\nabla h(\vy) - \nabla h(\vx) + \nabla f(\vx))
\end{equation}
We will refer to \eqref{localupdate} as a local step because it uses a new gradient of $h$ to update the state. The idea is that for each state $\vx$ of $f$, we perform a number of local steps, then update the state $\vx$ based on the last ``local'' steps.

\textbf{Control variates and SVRG.} \eqref{localupdate} can be understood as a generalization of the control variate idea used in SVRG \citep{SVRG}. To optimize a function $f(\vx):= \frac{1}{n}\sum_{i=1}^n f_i(\vx)$, SVRG uses the modified gradient $\vg_{SVRG} = \nabla f_i(\vy) - \nabla f_i(\vx) + \nabla f(\vx)$ where $\vx$ is a snapshot that is updated less frequently and $i$ is sampled randomly so that this new gradient is still unbiased. The convergence of SVRG is guaranteed by the fact that the ``error'' of this new gradient is  $\E\norm{\vg_{SVRG} -\nabla f(\vx) }^2=\mathcal{O}(\norm{\vy-\vx}^2)$ so that if $\vy-\vx\rightarrow 0$, then convergence is guaranteed without needing to take small step-sizes. The main idea of our work is to use instead of $f_i$ another function $h$ that is related to $f$; this means using a gradient $\vg = \nabla h(\vy) - \nabla h(\vx) + \nabla f(\vx)$, then if we can still guarantee that the error is $\mathcal{O}(\norm{\vy-\vx}^2)$  everything should still work fine.

\textbf{Other Variance reduction techniques.} given the form of \eqref{localupdate} that is very similar to the SVRG gradient, it is natural to ask what would happen when using a form similar to other variance reduction techniques such as SARAH \citep{nguyen2017sarah} (this will amount to choosing a gradient $\vg^t = \nabla h(\vy^t) - \nabla h(\vy^{t-1}) + \vg^{t-1}$ with $\vg^0 = \nabla f(\vy^0:=\vx)$). Unfortunately, this choice leads to the same theoretical rate obtained by the SVRG-like choice, the main reason being that on top of the biasedness of SARAH, the fact that $h$ is potentially different from $f$ (even in average) introduces another biasedness, which limits the potential gain; moreover, it is not evident how to treat the case where we only have access to stochastic gradients of $f$.

\textbf{What if we can't access the true gradient of $f$?}
When $f$ is not a finite average, we can't access its true gradient; in this case, we propose replacing the ``correction'' $\nabla f(\vx) - \nabla h(\vx)$ by a quantity $\vm_{f-h}$ that is a form of momentum (takes into account past observed gradients of $f-h$), the idea of using momentum is used to stabilize the estimate of the quantity $\nabla f(\vx) - \nabla h(\vx)$ as momentum can be used to reduce the variance. Specifically, we use $\vg = \nabla h(\vy) + \vm_{f-h}$. We note that for this estimate we have $\E\norm{\vg -\nabla f(\vy) }^2=\mathcal{O}(\norm{\vy-\vx}^2 + \norm{\vm_{f-h} - \nabla f(\vx) + \nabla h(\vx)}^2)$ which means that $\vg$ approximates $\nabla f(\vy)$ as long as $\vy$ is not far from $\vx$ and $\vm_{f-h}$ is a good estimate of the quantity $\nabla f(\vx) - \nabla h(\vx)$.

\textbf{Local steps or a subproblem?} we note that another possible approach is, instead of defining local steps based on $h$, to define a subproblem that gives the next estimate of $f$ directly by solving \begin{equation}
    \label{subproblem}
    \vx^+ \in arg\min_{\vy \in\R^d}\{h(\vy) +  \vm_{f-h}^\top(\vy - \vx) + \frac{1}{2\eta}\norm{\vy - \vx}^2\}\;.
\end{equation}
Our results can be understood as approximating a solution to this sub-optimization problem~\eqref{subproblem}.

\textbf{Notation.} For a given function $J$, we denote $\vg_J(\cdot,\xi_J)$ an unbiased estimate of the gradient of $J$ with randomness $\xi_J$.

\textbf{General meta-algorithm.} Based on the discussion above, we propose the (meta)-Algorithm~\ref{alg1}: at the beginning of each round $t$, we have an estimate $\vx^{t-1}$ of the minimizer.  We sample a new $\zeta_{f-h}$, and compute $\vg_{f-h}(\vx_{t-1},\zeta_{f-h})$, a noisy unbiased estimate of the gradient of $f-h$; we then update $\vm_{f-h}^t$ a momentum of $f-h$. We transfer both $\vx^{t-1}$ and $\vm_{f-h}^t$ to the helper $h$ which uses both to construct a set of biased gradients $\vd^t_k$ of $\nabla f(\vy^t_{k-1})$ that are updated by $h$. These biased gradients are then used to update the ``local'' states  $\vy^t_{k}$, then $f$ updates its own state $\vx^t$ based on the last local states $\vy^t_{0\leq k\leq K}$; throughout this work, we simply take $\vx_t = \vy^t_K$.

\begin{algorithm}[h]
\begin{algorithmic}
\REQUIRE $\vx_0$, $\eta$, $T$, $K$
\FOR{$t=1$ to $T$}
\STATE sample $\vg_{f-h}(\vx^{t-1},\xi^t_{f-h})\approx \nabla f(\vx^{t-1})- \nabla h(\vx^{t-1})$
\STATE update $\vm_{f-h}^t\approx \nabla f(\vx^{t-1}) - \nabla h(\vx^{t-1})$ § momentum
\STATE define $\vy^t_0=\vx^{t-1}$
\FOR{$k=1$ to $K$}
\STATE sample $\vg_h(\vy^{t}_{k-1},\xi^{t,k}_h)\approx \nabla h(\vy^{t}_{k-1})$
\STATE use it and $\vm^t$ to form $\vd^t_k \approx \nabla f(\vy^{t}_{k-1})$
\STATE $\vy^t_{k} = \vy^t_{k-1} - \eta \vd^t_k$
\ENDFOR
\STATE update $\vx^t$
\ENDFOR
\end{algorithmic}
\caption{stochastic optimization of $f$ with access to the auxiliary $h$} 
\label{alg1}
\end{algorithm}

For our purposes we can take $\vd^t_k= \vg_h(\vy^{t}_{k-1},\xi^{t,k}_h) + \vm^t_{f-h}$ which should be a good approximation of $\nabla f(\vy^t_{k-1})$ and $\vm^t_{f-h}$ is a momentum of $f-h$, we will consider two options: \textbf{classical momentum} or \textbf{MVR} (for momentum based variance reduction \citep{MVR}).

\textbf{Decentralized auxiliary information.}
More generally, we can assume having access to $N$ auxiliary functions $h_i(\vx) := \E_{\xi_{h_i}}\big[ h_i(\vx; \xi_{h_i})\big]$. While we can treat this case by taking $h = (1/N)\sum_{i=1}^N h_i$, we propose a more interesting solution that also works if the helpers $h_i$ are decentralized and cannot live in the same server. In this case, we can sample a set $S^t$ of helpers, each $h_i\in S^t$ will do the updates exactly as in Algorithm\ref{alg1}, but this time $\vx^t$ will be constructed using all $\{\vy^t_{i, K},i\in S^t\}$. In our case, we propose to use the average $\vx^t = (1/S)\sum_{i\in S^t} \vy^t_{i,K}$ . 

\section{Algorithms and Results}
We discuss here some particular cases of Algorithm~\ref{alg1} based on choices of $\vm^t_{f-h}$ and $\vd^t_k$, which we kept a little bit vague purposefully.

We will consider mainly two approaches. We call the first one the  \textbf{Naive approach} and the second one we refer to as \textbf{Bias correction}.

We remind the reader that we can take $\vd^t_k= \vg_h(\vy^{t}_{k-1},\xi^{t,k}_h) + \vm^t_{f-h}$.

\textbf{Naive approach.} This approach is exactly as suggested by its name, naive; it simply ignores the part $f-h$ or, in other words, sets $\vm^t_{f-h} = 0$. The main idea is to use gradients (or gradient estimates) of $f$ whenever they are available and use gradients of $h$ when gradients of $f$ are not available. In our case, we alternate between one step using a gradient of $f$ and $(K-1)$-steps using gradients of $h$ without any correction. We will show that this approach suffers heavily from the bias between the gradients of $f$ and $h$. It is worth noting that in federated learning, this approach corresponds to Federated averaging \citep{MacMahan2017}.We remind the reader that we can take $\vd^t_k= \vg_h(\vy^{t}_{k-1},\xi^{t,k}_h) + \vm^t_{f-h}$.\\
\textbf{Bias correction.} In the absence of noise, this approach simply implements \eqref{localupdate}. Specifically, the inner loop in Algorithm~\ref{alg1} does the following: 
$$\vy^t_k = \vy^t_{k-1} - \eta (\underbrace{\nabla h(\vy^t_{k-1}) - \nabla h(\vx^{t-1}) + \nabla f(\vx^{t-1})}_{:=\vd^t_k})\,.$$

In the noisy case (when we can only have access to noisy gradients of $f$), we approximate the above step in the following way:

$$\vy^t_k = \vy^t_{k-1} - \eta (\underbrace{\vg_h(\vy^t_{k-1},\xi^t_{k-1}) + \vm_{f-h}^t}_{:=\vd^t_k})\;,$$
where $\vm_{f-h}^t$ is a momentum of $f-h$. We consider two methods for defining this momentum:

\begin{align}
    \vm_{f-h}^t &= (1-a)\vm_{f-h}^{t-1} + a \vg_{f-h}(\vx^{t-1},\xi^{t-1}_{f-h})\qquad (\textbf{AuxMOM})\label{AuxMOM}\, ,\\
    \vm_{f-h}^t &= (1-a)\vm_{f-h}^{t-1} + a \vg_{f-h}(\vx^{t-1},\xi^{t-1}_{f-h}) \nonumber\\&+ (1-a) \big(\vg_{f-h}(\vx^{t-1},\xi^{t-1}_{f-h}) - \vg_{f-h}(\vx^{t-2},\xi^{t-1}_{f-h})\big)\qquad (\textbf{AuxMVR})\label{AuxMVR}\, .
\end{align}
\textbf{AuxMOM} simply uses the classical momentum, whereas \textbf{AuxMVR} uses the momentum-based variance reduction technique introduced in \citep{MVR}.

\noindent\textbf{Assumptions.} To analyze our algorithms, we will make the following assumptions on the target $f$ and the helper $h$.

\begin{assumption}\label{A1}
\textbf{(Smoothness.)} We assume that $f$ has $L$-Lipschitz gradients and satisfy
    \begin{equation*}\label{eqn:smoothness}
        \norm{\nabla f(\vx) - \nabla f(\vy)} \leq L \norm{\vx - \vy}\,.
    \end{equation*}
\end{assumption}

\begin{assumption}\label{A2}
\textbf{(Variance.)} The stochastic gradients $\vg_f(\vx; \zeta_f)$, $\vg_h(\vx; \zeta_h)$ and $\vg_{f-h}(\vx; \zeta_{f-h})$ are unbiased, and satisfy
    \begin{equation*}\label{eqn:variance}
        \E_{\zeta_J}\norm{\vg_J(\vx; \zeta_J) - \nabla J(\vx)}^2 \leq \sigma_J^2 \,, J\in\{f,h,f-h\}\,.
    \end{equation*}
\end{assumption}

In Assumption~\ref{A2}, we assume that we directly have access to unbiased gradient estimates of $f-h$; this does not restrict in any way our work since $\vg_f - \vg_h$ is such an estimate; however, this last estimate has a variance of $\sigma_{f-h}^2 = \sigma_f^2 + \sigma_h^2$, in general, it is possible to have a correlated estimate such that $\sigma_{f-h}^2<\sigma_f^2 + \sigma_h^2$. If the batches $\xi_f$ and $\xi_h$ are drawn such that $\vg_f$ and $\vg_h$ are positively correlated, then it is even possible to have $\sigma_{f-h}^2<\sigma_f^2$.

\begin{assumption}\label{A3}
\textbf{Hessian similarity.} Finally, we will assume that for some $\delta \in [0,2L]$ we have
    \begin{equation*}\label{eqn:hessian-sim}
        \norm{\nabla^2 f(\vx) - \nabla^2 h(\vx)} \leq \delta \,.
    \end{equation*}    
\end{assumption} Note that if $h(\cdot)$ is also smooth (satisfies Assumption~\ref{A1}), then we would have Hessian similarity with $\delta \leq 2L$ since
    \[
        \norm{\nabla^2 f(\vx) - \nabla^2 h(\vx)} \leq \norm{\nabla^2 f(\vx)} + \norm{\nabla^2 h(\vx)}  \leq 2L\,.
    \]
As sanity checks, $h=0$ corresponds to $\delta = L$ (this case should not lead to any benefit), and $h=f$ gives $\delta=0$; we will consider these two cases to verify our convergence rates.

\textbf{Discussion of the assumptions.} Assumptions \ref{A1} and \ref{A2} are very common assumptions in the optimization literature. Under these two assumptions, it is well-known that SGD \citep{SGd51, SGD52} has an optimal convergence rate. Assumption~\ref{A3} was used extensively in Federated Learning \citep{MiMe,karimireddy2019scaffold} and was considered for personalization \citep{LinSpeedup}.

\textbf{Relaxing the Hessian similarity.} \citet{GeneralizedSmoothness} propose a generalized smoothness assumption where the norm of the Hessian can grow with the norm of the gradient. We believe it possible to extend our theory to accommodate such an assumption. In the same spirit, we can let $\norm{\nabla^2 f(\vx) - \nabla^2 h(\vx)}$ grow with $\norm{\nabla f(\vx)}$. More important than this is finding similarity measures that apply to some of the potential applications we cite in Section~\ref{applications}.

\section{Results}
We start by showing that the convergence rate of the naive approach is dominated by the gradient bias. We then show the convergence rate of our momentum variant \textbf{AuxMOM} that will be compared to SGD/GD. We will also state the convergence rate of our \textbf{AuxMVR} variant and compare it to MVR/GD.\\
\textbf{Notation.} We assume $f$ is bounded from below and denote $f^\star = \underset{\vx\in\R^d}{\inf }f(\vx)$. \\
\textbf{Remark.} We would like to note that while we state our results for the case of one helper $h$, they also apply without needing any additional assumption when we have $N$ decentralized helpers $h_1,\ldots,h_N$ from which we sample $S = 1$ helper at random. In the case where $S>1$, we need an additional weak-convexity assumption to deal with the averaging performed at the end of each step; this general case is treated in Appendix~\ref{AppendixDecentralized}.

\subsection{Naive approach}
In this section, we show the convergence results using the naive approach that uses a gradient of $f$ followed by $K-1$ gradients of $h$. For the analysis of this case (and only of this case), we need to make an assumption on the gradient bias between $f$ and $h$.
\begin{assumption}
\label{BGD}
The gradient bias between $f$ and $h$ is $(m,\zeta^2)-$bounded:
$\forall \vx\in\R^d\; : \norm{\nabla f(\vx) - \nabla h(\vx)}^2\leq m\norm{\nabla f(\vx)}^2 + \zeta^2.$
\end{assumption}

\begin{framedtheorem}\label{THBiased}
There exists $f$ and $h$ satisfying  assumptions~\ref{A1},\ref{A2},\ref{A3},\ref{BGD} with $\delta=0$, $\sigma_f=\sigma_h=0$ such that $\frac{1}{KT}\sum_{t=1}^T\norm{\nabla f(\vx^{t-1})}^2 = \Omega (\zeta^2)$.
\end{framedtheorem}
Using the biased SGD analysis in \citep{BiasedSGD}, it is easy to prove an upper bound, but we only need a lower bound for our purposes. Theorem~\ref{THBiased} shows that this naive approach cannot guarantee convergence to less than $\zeta^2$ (up to some constant). \\
\textbf{Note.} Theorem~\ref{THBiased} is very loose for Federated Averaging as, in this case, the function $f$ is directly tied to the helper entities (it is the average). Nevertheless, it's worth noting that heterogeneity and client drift, which are akin to gradient bias, are recognized as factors that can constrain the performance of FedAVG. Consequently, there have been efforts to mitigate these issues through approaches such as those discussed in \citep{karimireddy2019scaffold, MiMe}.

In what follows, we will show that our two proposed algorithms solve this bias problem.

\subsection{Momentum based approach}
We consider the instance of Algorithm~\ref{alg1} with the momentum choice in \eqref{AuxMOM}. For clarity, a detailed Algorithm can be found in the Appendix  Algorithm~\ref{alg:AUXMOM}.

\textbf{Convergence rate.}
We prove the following theorem that gives the convergence rate of this algorithm in the non-convex case.\\
\begin{framedtheorem}\label{MoM}
Under assumptions A\ref{A1}, \ref{A2},\ref{A3}. For $\vm^0$ such that $E^0\leq \sigma_{f-h}^2/T$, $a = \max(\frac{1}{T},36\delta K\eta)$ and 
$\eta = \min(\frac{1}{L},\frac{1}{192\delta K}, \sqrt{\frac{F^0}{144L\beta K^2T\sigma_f^2}})\, ,$
we get the following :
$$\frac{1}{KT}\sum_{t=1}^T\sum_{k=0}^{K-1} \E\big[\norm{\nabla f(\vy^t_k)}^2\big] \leq \mathcal{O}\Big( \sqrt{\frac{L\beta F^0\sigma_f^2}{T}} + \frac{(L +  \delta K)F^0}{KT} + \frac{\sigma_{f-h}^2}{T} + \frac{\sigma_h^2}{K}\Big)\, .$$
Where $F^0 = f(\vx^0) - f^\star$, $E^0 = \E[\norm{\vm^0 - \nabla f(\vx^0)+\nabla h(\vx^0)}^2]$ and $\beta = \mathcal{O}\big(\frac{\delta}{L}\frac{\sigma_{f-h}^2}{\sigma_{f}^2} + \frac{1}{K}(1+\frac{\delta}{L})\frac{\sigma_h^2}{\sigma_{f}^2}$\big)
.
\end{framedtheorem}
\textbf{Note.} the condition $E^0\leq \sigma_{f-h}^2/T$ can be ensured by using a batch size $T$ times larger to estimate $\vm^0$, this will at most result in doubling the number of steps $T$. In practice, we did not need to ensure this condition. \\
We also note the term $\sigma_h^2/K$, which corresponds to the error of solving the inner problem \ref{subproblem}. Remarkably, although we are using SGD steps of the helper $h$, we get a $1/K$ error instead of $1/\sqrt{K}$; this can be explained by the fact that we initialize using the (approximate) solution of the last inner problem which accelerates convergence; we can potentially get faster rates by using variance reduction methods on $h$.\\
We will compare this rate to that of SGD under the same amount of work asked from $f$:  $\mathcal{O}\bigg( \sqrt{\frac{LF^0\sigma_f^2}{T}} + \frac{L F^0}{T} \bigg)$ which corresponds to $\mathcal{O}\Big(\frac{LF^0\sigma_f^2}{\varepsilon^2} + \frac{LF^0}{\varepsilon}\Big)$ stochastic gradient calls of $f$ necessary to get an $\varepsilon$-stationary point $\Hat{\vx}$ in expectation i.e. a point $\Hat{\vx}$ such that $\E[\|\nabla f(\Hat{\vx})\|_2^2]\leq \varepsilon$ (the expectation is taken over the algorithm that generated $\Hat{\vx}$).

In comparison, based on Theorem~\ref{MoM}, we can show the following corollary:

\begin{framedcorollary}[Iteration complexity of AuxMOM]
Let $\Hat{x}$ be chosen uniformly at random from the iterates generated by AuxMOM.  To guarantee $\E[\|\nabla f(\Hat{\vx})\|_2^2]\leq \varepsilon$, AuxMOM needs at most $$\mathcal{O}\Big(\frac{\delta F^0\sigma_{f-h}^2}{\varepsilon^2} + \frac{\delta F^0}{\varepsilon} + \frac{\sigma_{f-h}^2}{\varepsilon} + \frac{\sigma_h^2}{LF^0} + 1_{\sigma_h\neq 0}\frac{LF^0}{\varepsilon} \Big)$$
(stochastic) gradient calls of $f$.   
\end{framedcorollary}

In particular, we see that in the dominating order of $1/\varepsilon$, when access to gradients of $f$ is stochastic (\textbf{noisy case}), we replaced $L\sigma_f^2$ by $\delta \sigma_{f-h}^2$ which might be very small, either because $\delta\ll L$ or $\sigma_{f-h}^2\ll \sigma_f^2$. In the \textbf{noiseless case} (when we have full access to gradients of $f$), we replaced $L$ by $\delta$.

Of course, the gain that we obtain is not for free; it comes at the cost of using $K = \mathcal{O}\Big(\frac{\sigma_h^2}{\varepsilon} + 1_{\delta\neq 0}\frac{L}{\delta} + 1\Big)$ inner steps of the helper $h$.

\textbf{Sanity checks.} For $h=f$, we have $\delta=0$, we get the iteration complexity $\mathcal{O}\Big(\frac{\sigma^2}{\varepsilon} + \frac{LF^0}{\varepsilon}\Big)$ which corresponds to the rate of $KT$-steps of SGD. For $h=0$, we have $\delta=L$; in this case, we don't gain anything as should be the case.
\comment{\textbf{When do we gain from the helper?} 
\textbf{(Noiseless case.)} 
for $\sigma_f=\sigma_h=0$,  we have the rate $\mathcal{O}(LF^0/(KT) + \delta F^0/T)$, we compare this to $\mathcal{O}(LF^0/T )$ the expected rate had we only used gradients of $f$. This means that we gain a multiplicative factor of $1/K + \delta/L$, which means we need $\delta/L\ll 1$ (very small) to neglect it. however, there is another consequence: for $K\leq L/\delta$ the rate is $\mathcal{O}(LF^0/(KT))$, in other words, it is as if we were using $K$ gradients of $f$ instead of only $1$ and $K-1$ gradients of $h$.\textbf{(Noisy case.)} We see that in the dominant term we have the factor $\beta$ that multiplies the variance term $\sigma_{f}^2$ that we have in SGD (when we don't use the helper $h$), the smaller $\beta$ the better, this is the case in general if $\delta/L$, $\sigma_{h}^2/ \sigma_f^2$ are small and $\sigma_{f-h}^2/ \sigma_f^2$. We also notice a small gain from $K$. 

Specifically, $\beta = \mathcal{O}\big(\frac{\delta}{L}\frac{\sigma_{f-h}^2}{\sigma_{f}^2} + \frac{1}{K}(1+\frac{\delta}{L})\frac{\sigma_h^2}{\sigma_{f}^2}$\big) is constituted of two terms, the first one $\frac{\delta}{L}\frac{\sigma_{f-h}^2}{\sigma_{f}^2}$ suggests that we may benefit if $\delta\ll L$ or if $\sigma_{f-h}^2\ll \sigma_{f}^2$, we get the latter if our gradient estimates of $f$ are positively correlated with the gradients of $h$. The second term $\frac{1}{K}(1+\frac{\delta}{L})\frac{\sigma_h^2}{\sigma_{f}^2}$ decreases with $K$ and we can neglect it if we take $K\geq K_0=\max(L\sigma_f^2/\delta\sigma_{f-h}^2, \sigma_h^2/\sigma_{f-h}^2)$.

\textbf{Overall,} we have \textbf{two regimes}:\begin{itemize}
    \item \textbf{(I)} if $K\leq K_0$ then our rate becomes $\mathcal{O}\bigg( \sqrt{\frac{LF^0\sigma_{h}^2}{KT}} + \frac{L F^0}{KT} \bigg)$ i.e. we replaced $L$ by $L/K$, we gain from increased values of $K$.
    \item  \textbf{(II)} If $K\geq K_0$ our rate becomes $\mathcal{O}\bigg( \sqrt{\frac{\delta F^0\sigma_{f-h}^2}{T}} + \frac{\delta F^0}{T} \bigg)$, which means we replaced $L$ by $\delta$ which might be very small. This is equivalent to solving the sub-problem ~\eqref{subproblem}, but instead of needing $K\rightarrow\infty$ we only need $K\sim L/\delta$. We also replaced $\sigma_f^2$ by $\sigma_{f-h}^2$ which might also be small if we have positively correlated noise between gradients of $f$ and $h$.
\end{itemize}}

\textbf{SVRG in the non-convex setting.} In particular, because Theorem~\ref{MoM} also applies for the case where we have multiple helpers, and we sample each time $S=1$ helpers, we get that SVRG converges in this case as $\mathcal{O}\big(\frac{(L + \delta K)F^0}{KT}\big)$, which matches the known SVRG rate \citep{SVRGnonC} (up to $\delta$ being small). More interestingly, we obtain the same convergence rate by using only one batch (no need to sample) if the batch is representative enough of the data (i.e., satisfies our Hessian similarity Assumption~\ref{A3}).\\
\textbf{Local steps help in federated learning.} By employing the decentralized version of this theorem (as detailed in Appendix~\ref{AppendixDecentralized}), we can ascertain that local steps (denoted as $K$ in our context) indeed provide a beneficial contribution to Federated Learning. This finding aligns with the results presented in \citep{MiMe}.

\comment{\textbf{Remark.} In federated learning, when we sample $S$ agents (this case is included in the appendix) the variance $\sigma_f^2$ should be modified to include an additional term due to sampling, the new variance should be $\frac{\sigma^2}{S} + \frac{N-S}{NS}H^\star$  for a constant $H^\star$ that quantifies the variance due to agent sampling and $\sigma^2$ the variance of local gradients.}

\subsection{MVR based approach}
We consider now the instance of Algorithm~\ref{alg1}, which uses the MVR momentum in in~\eqref{AuxMVR}. The detailed algorithm can be found in the Appendix Algorithm~\ref{alg:AUXMVR}.\\
\textbf{Stronger assumptions.} For the analysis of this variant, we need a stronger similarity assumption
\begin{assumption}\label{A3'}
\textbf{Stronger Hessian similarity.}
\begin{center}
    $\exists\delta \in [0,L] \;\forall \zeta_{f-h} \;$ :\; $\vg_{f-h}(\cdot,\zeta_{f-h})$ is $\delta$-Lipschitz.
\end{center}
\end{assumption}
Assumption~\ref{A3'} is stronger than its counterpart Assumption~\ref{A3} used for AuxMOM. It is reasonable to need a stronger assumption since, already, when using the MVR momentum, a stronger smoothness assumption has to hold.
\\
\textbf{Convergence of this variant.}
We prove the following theorem that gives the convergence rate of this algorithm in the non-convex case.
\vspace{1mm}\begin{framedtheorem}\label{MVR}
Under assumptions A\ref{A1}, \ref{A2},\ref{A3'}. For $\vm^0$ such that $E^0\leq \sigma_{f-h}^2/T$, for $a =\max(\frac{1}{T}, 1156 \delta^2 K^2\eta^2)$  and 
$\eta = \min\Big(\frac{1}{L},\frac{1}{192\delta K}, \frac{1}{K}\Big(\frac{F^0}{18432\delta^2T\sigma_{f-h}^2}\Big)^{1/3},\sqrt{\frac{F^0}{KT(L/2 + 8\delta K)}}\Big)\, .$
This choice gives us the rate :
$$\frac{1}{KT}\sum_{t=1}^T\sum_{k=1}^{K-1} \E\big[\norm{\nabla f(\vy^t_{k})}^2\big] = \mathcal{O}\Big( \big(\frac{\delta F^0\sigma_{f-h}}{T}\big)^{2/3} + \sqrt{\frac{(L + \delta )F^0\sigma_{h}^2}{KT}} +  \frac{(L + \delta K)F^0}{KT} + \frac{\sigma_{f-h}^2}{T} + \frac{\sigma_h^2}{K}\Big)\, .$$
\end{framedtheorem}

\textbf{Baseline.} Under the same assumptions and for the same amount of work, MVR or STORM \citep{MVR} has the rate: $\mathcal{O}\bigg(\big(\frac{L F^0\sigma_f}{T}\big)^{2/3} + \frac{LF^0}{T}\bigg)$, this rate corresponds to needing at most $\mathcal{O}\bigg(\frac{L F^0\sigma_f}{\varepsilon^{3/2}} + \frac{LF^0}{\varepsilon}\bigg)$ (stochastic) gradient calls of $f$ to reach a point $\Hat{\vx}$ such $\E[\norm{\nabla f(\Hat{\vx})}^2]\leq \varepsilon$ or $\varepsilon$-stationary point. This rate/iteration complexity is known to be optimal under the strong smoothness assumption: $f(\cdot,\xi)$ is $L$-smooth for all $\xi$ almost surely.\\
Using Theorem~\ref{MVR}, it is easy to show the following corollary: \begin{framedcorollary}[Iteration complexity of AuxMVR]
Let $\Hat{x}$ be chosen uniformly at random from the iterates generated by AuxMVR.  To guarantee $\E[\|\nabla f(\Hat{\vx})\|_2^2]\leq \varepsilon$, AuxMVR needs at most $$\mathcal{O}\Big(\frac{\delta F^0\sigma_{f-h}}{\varepsilon^{3/2}} + \frac{\delta F^0}{\varepsilon} + \frac{\sigma_{f-h}^2}{\varepsilon} + \frac{\sigma_h^2}{LF^0} + 1_{\sigma_h\neq 0}\frac{LF^0}{\varepsilon} \Big)$$
(stochastic) gradient calls of $f$.   
\end{framedcorollary}
 
The same conclusions as for AuxMOM are valid here. In the \textbf{noisy case}, we replaced $L\sigma_f^2$ by $\delta \sigma_{f-h}^2$ which might be very small, either because $\delta\ll L$ or $\sigma_{f-h}^2\ll \sigma_f^2$. In the \textbf{noiseless case} (when we have full access to gradients of $f$), we replaced $L$ by $\delta$.

Again, this potential gain is obtained at the cost of using $K = \mathcal{O}\Big(\frac{\sigma_h^2}{\varepsilon} + 1_{\delta\neq 0}\frac{L}{\delta} + 1\Big)$ inner steps of the helper $h$.

\section{Potential applications}\label{applications}
The optimization with access to auxiliary information proposed is general enough that we can use it in many applications where we either have access to auxiliary information explicitly, such as in auxiliary learning or transfer learning, or implicitly, such as in semi-supervised learning. We present here a non-exhaustive list of potential applications.\\
\textbf{Reusing batches in SGD training and core-sets.} In Machine Learning, the empirical risk minimization consisting of minimizing a function of the form $f(\vx) = \frac{1}{N}\sum_{i=1}^N L(\vx,\xi_i)$ is ubiquitous. In many applications, we want to summarize the data-set $\{\xi_i\}_{i=1}^N$ by a smaller potentially weighted subset $CS  = \{(\xi_{i_j},w_j)\}_{j=1}^M$, for positive weights $(w_j)_{j=1}^M$ that add up to one, this is referred to as a core-set \citep{coreset}. In this case we can set $h(\vx) = \sum_{(w,\xi) \in CS} w L(\vx,\xi) $. An even sampler problem is when we sample a batch $B\subset \{\xi_i\}_{i=1}^N$ of size $b\leq N$, one question we can ask is how can we reuse this same batch to optimize $f$? In this case we set $h(\vx) = (1/b)\sum_{\xi \in B} L(\vx,\xi)$. In the case where we have many batches $\{B_i\}_{i\in I}$, we can set $h_i(\vx) = (1/\vert B_i\vert)\sum_{\xi \in B_i} L(\vx,\xi)$ for each $i\in I$ and use our decentralized framework to sample each time a helper $h$. \comment{In particular, the momentum variant proposed gives a generalization of the famous SVRG \citep{SVRG} to the case where instead of using the true gradient of $f$, we only have to sample an independent batch and use momentum.}\\
\textbf{Note.} In case $h$ is obtained using a subset $B$ of the dataset defining $f$, there is a-priori a trade-off between the similarity between $f$ and $h$ measured by the hessian similarity parameter $\delta(B)$ and the cheapness of the gradients of $h$. A-priori, the bigger the size of $B$ is, the easier it is to obtain a small $\delta(B)$, but the more expensive it is to compute the gradients of $h$.\\
\textbf{Semi-supervised learning.} In Semi-supervised Learning \citep{zhu2005semi}, We have a small set of carefully cleaned data $\mathcal{Z}$ directly related to our target task and a large set of unlabeled data $\Tilde{\mathcal{Z}}$. 
Let us also assume there is an auxiliary pre-training task defined over the source data, e.g., this can be the popular learning with contrastive loss \citep{chen2020simple}. In this setting, we have a set of transformations $\mathcal{T}$ which preserves the semantics of the data, two unlabeled data samples $\Tilde{\zeta_1}, \Tilde{\zeta_2} \in \Tilde{\mathcal{Z}}$, and a feature extractor $\phi_{\vx}(\cdot): \Tilde{\mathcal{Z}} \rightarrow \R^k$ parameterized by $\vx$. Then, the contrastive loss is of the form:
\(
    \Tilde{\ell}(\, \phi_{\vx}(\tilde\zeta_1)\,,\, \phi_{\vx}(\mathcal{T}(\Tilde{\zeta_1}))\,,\, \phi_{\vx}(\tilde\zeta_2)\,)  
\) where the loss tries to minimize the distance between the representations $\phi_{\vx}(\tilde\zeta_1)$ and $\phi_{\vx}(\mathcal{T}(\tilde\zeta_1))$, while simultaneously maximizing distance to $\phi_{\vx}(\tilde\zeta_2)$. Similarly, we also have a target loss $\ell: \mathcal{Z} \rightarrow \R$, which we care about; then, we can define
\[
    f(\vx) =   \E_{\zeta \in \mathcal{Z}}\big[{\ell(\vx; \zeta)}\big]\] and \[ h(\vx) = \E_{\tilde\zeta_1, \tilde\zeta_2, \mathcal{T}}\big[\tilde\ell(\phi_{\vx}(\tilde\zeta_1), \phi_{\vx}(\mathcal{T}(\tilde\zeta_1)), \phi_{\vx}(\tilde\zeta_2))\big]\,.
\]

Another way our framework can be used for semi-supervised learning is by endowing the unlabeled data with synthetic labels (either random labels as we will see for logistic regression or via Pseudo-labeling \citep{Pseudolabeling}) and defining the helper $h$ as the loss over the unlabeled data with its assigned labels. \\
\comment{The quality of the auxiliary unsupervised task can then be quantified as the Hessian distance
$
    \norm{\nabla^2 f(\vx) - \nabla^2 h(\vx)} \leq \delta\,.
$
Perhaps unintuitively, this is the \emph{only} measure of similarity we need between the target task and the auxiliary source task. In particular, we do not need the optimum parameters to be related in any other manner. We hope this relaxation of the similarity requirements would allow for a more flexible design of the unsupervised source tasks.\\}
\textbf{Transfer learning.} For a survey, see \citep{TLsurvey}. In this case, in addition to a cleaned data set $\mathcal{Z}$ we have access to  $\Tilde{\mathcal{Z}}$ a pre-training source $\Tilde{\mathcal{Z}}$. Given a fixed mask $M$ (for masking deep layers in a neural network) and a loss $\tilde\ell$, we set 
\[
    f(\vx) =   \E_{\zeta \in \mathcal{Z}}\big[{\ell(\vx; \zeta)}\big] \quad \text{ and } \quad h(\vx) = \E_{\tilde\zeta\in \Tilde{\mathcal{Z}}}\big[\tilde\ell(M\odot\vx, \tilde\zeta )\big]\,.
\]
\comment{Again, we quantify the quality of the transfer using the hessian distance.\\}
\textbf{Federated learning.} Consider the problem of distributed/federated learning where data is decentralized over $N$ workers \citep{mcmahan2017communication}. Let $\{F_1(\vx), \dots, F_N(\vx)\}$ represent the $N$ loss functions defined over their respective local datasets. In such settings, communication is much more expensive (say $M$ times more expensive) than the minibatch gradient computation time \citep{karimireddy2018adaptive}. In this case, we set\[
    f(\vx) = \frac{1}{N}\sum_{i=1}^N F_i(\vx) \;\text{ and } \quad h_i(\vx) = F_i(\vx)\;.
\]
Thus, we want to minimize the target function $f(\vx)$ defined over all the workers' data but only have access to the cheap loss functions defined over local data. The main goal, in this case, is to limit the number of communications and use as many local steps as possible. Our proposed two variants are very close to MiMeSGDm and MiMeMVR from \citep{MiMe}.\\
\textbf{Personalized learning.} This problem is a combination of the federated learning and transfer learning problems described above and is closely related to multi-task learning \citep{ruder2017overview}. Here, there are $N$ workers, each with a task $\{F_1(\vx), \dots, F_N(\vx)\}$, and without loss of generality, we describe the problem from the perspective of worker 1. In contrast to the delayed communication setting above, in this scenario, we only care about the \emph{local loss} $F_1(\vx)$, whereas all the other worker training losses $\{F_2(\vx), \dots, F_N(\vx)\}$ constitute auxiliary data:
 \[
     f(\vx) = \big[F_1(\vx) = E_{\zeta_1}[F_1(\vx; \zeta_1)]\big]\] and for $i>2$\[  h_i(\vx) = \big[F_i(\vx) = \E_{\zeta_i} [F_i(\vx; \zeta_i)]\big]\,.
\]
 In this setting, our main concern is the limited amount of training data available on any particular worker---if this was not an issue, we could have simply directly minimized the local loss $F_1(\vx)$.\\
\textbf{Training with compressed models.} Here, we want to train a large model parameterized by $\vx \in \R^d$. To decrease the cost (both time and memory) of computing the backprop, we instead mask (delete) a large part of the parameters and perform backprop only on the remaining small subset of parameters \citep{sun2017meprop,yu2019universally}. Suppose that our loss function $\ell$ is defined over sampled minibatches $\xi$ and parameters $\vx$. Also, let us suppose we have access to some sparse/low-rank masks $\{\1_{\mathcal{M}_1}, \dots, \1_{\mathcal{M}_k}\}$ from which we can choose. Then, we can define the problem as 
\[
    f(\vx) = \E_{\xi}\big[\ell(\vx\,;\, \xi)\big] \quad \text{ and } \quad h(\vx) = \E_{\xi, \mathcal{M}}\big[\,\ell( \1_{\mathcal{M}} \odot \vx \,;\, \xi)\,\big]\,.
\]
Thus, to compute a minibatch stochastic gradient of $h(\vx)$ requires only storing and computing a significantly smaller model $\1_{\mathcal{M}} \odot \vx$ where most of the parameters are masked out. Let $\mD_{\mathcal{M}} = diag(\1_{\mathcal{M}})$ be a diagonal matrix with the same mask as $\1_{\mathcal{M}}$. The similarity condition then becomes
\[
    \norm{\nabla^2 f(\vx)  - \E_{\mathcal{M}}\big[ \mD_{\mathcal{M}}\nabla^2 f( \1_{\mathcal{M}} \odot \vx )\mD_{\mathcal{M}}\big] } \leq \delta\,.
\]
The quantity above first computes the Hessian on the masked parameters $\1_{\mathcal{M}} \odot \vx $ and then is averaged over the various possible masks to compute $\E_{\mathcal{M}}\big[ \mD_{\mathcal{M}}\nabla^2 f( \1_{\mathcal{M}} \odot \vx )\mD_{\mathcal{M}}\big]$. Thus, the parameter $\delta$ here measures the decomposability of the full Hessian $\nabla^2 f(\vx)$ along the different masked components. Again, we do not need the functions $f$ and $h$ to be related to each other in any other way beyond this condition on the Hessians---they may have completely different gradients and optimal parameters.\\
\textbf{Does the Hessian similarity hold in these examples?} In general, the answer is a no since already smoothness does not necessarily hold. Also, this will depend on the models we have and should be treated on a case-by-case basis. However, we can see from the experiment section that our algorithms perform well even for deep learning models. This work is a first attempt to unify these frameworks; we do not pretend that our answers or algorithms are the final attempt. We hope to spark further research in this direction, both theoretically and practically. \\
\textbf{Examples where it holds.} There are two special and simple examples where this similarity assumption holds. In both semi-supervised linear and logistic regressions, we note that the hessian does not depend on the label distribution; for this reason, we can endow the unlabeled data with any label distribution and construct the helper $h$ (based on unlabeled data) with $\delta=0$ (under the assumption that unlabeled data come from the same distribution as that of labeled data).

\section{Experiments}
\textbf{Baselines.} We will consider fine-tuning and the naive approach as baselines. Fine-tuning is equivalent to using the gradients of the helper all at the beginning and then only using the gradients of the main objective $f$. We note that in our experiments, $K=1$ corresponds to SGD with momentum; this means we are also comparing with SGDm. 
\subsection{Toy example} We consider a simple problem that consists in optimizing a function $f(x)= \frac{1}{2}x^2$ by enlisting the help of the function $h(x) = \frac{1}{2}(1 + \delta)(x - \zeta/(1 + \delta))^2$ for $x\in \R$.

\begin{figure}[h!]
	\centering
	\hspace*{-5pt}
	\includegraphics[width=0.6\textwidth ]{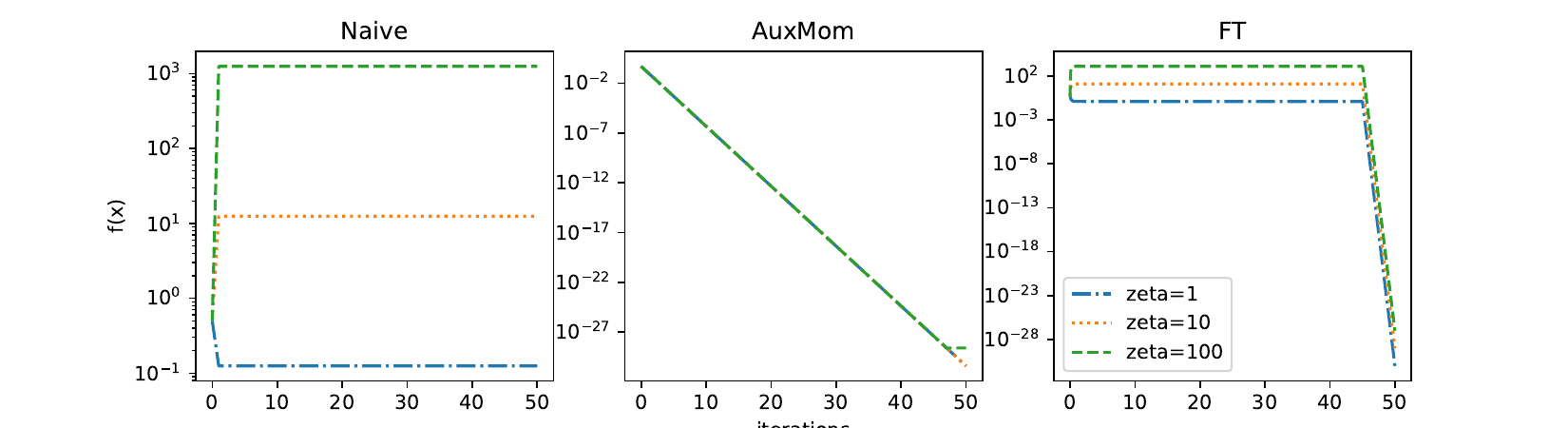}
    \setlength{\belowcaptionskip}{-8pt}
	\caption{\small Effect of the bias $\zeta$ (zeta in the figure) on the naive approach (Naive), AuxMOM and Fine Tuning (FT) for $K=10$,$\delta=1$ and $\eta=\min(1/2, 1/(\delta K))$. We can see that the naive approach fails to converge for large bias values, whereas AuxMOM converges all the time, no matter the value of the bias. Fine Tuning converges much slower for small values of $\delta$, but beats AuxMOM for $\delta=10$.}
    \label{fig1}
\end{figure}
\textbf{Effect of the bias $\zeta$.} Figure~\ref{fig1} shows that indeed our algorithm \textbf{AuxMOM} does correct for the bias. We note that in this simple example, having a big value of $\zeta$ means that the gradients of $h$ point opposite to those of $f$, and hence, it's better to not use them in a naive way. However, our approach can correct for this automatically and hence does not suffer from increasing values of $\zeta$. In real-world data, it is very difficult to quantify $\zeta$, thus why we can still benefit a little bit (in non-extreme cases) using the naive way.
\begin{figure}[h!]
	\centering
	\hspace*{-5pt}
	\includegraphics[width=0.6\textwidth ]{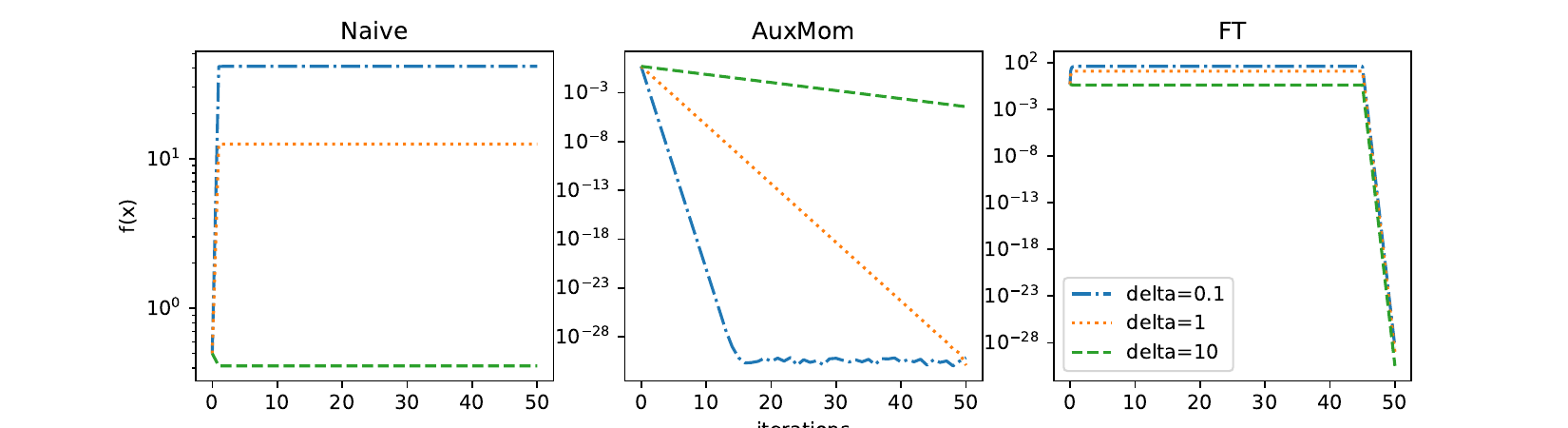}
 \setlength{\belowcaptionskip}{-8pt}
	\caption{\small \small Effect of the similarity $\delta$ (delta in the figure) on both the naive approach (Naive), AuxMOM and Fine Tuning for $K=10$,$\zeta=10$ and $\eta=0.5/(1+\delta)$ for Naive and AuxMOM and $\eta = 0.5/(1+\delta)$ then $eta = 0.5$ for Fine Tuning. We can see that the naive approach fails to benefit from small values of $\delta$; AuxMOM does not suffer from the same problem, whereas Fine Tuning is slower than AuxMOM.}
    \label{fig2}
\end{figure}

\textbf{Effect of the similarity $\delta$.} Figure~\ref{fig2} shows how the three approaches compare when changing $\delta$. We note that $\delta=0.1$ corresponds to $L/\delta=K=10$ the value used in our experiment; our theory predicts that for such values of $\delta$, the convergence is as if we were using only gradients of $f$.

\subsection{Leveraging noisy or mislabeled data} 
We show here how our approach can be used to leverage data with questionable quality, like the case where some of the inputs might be noisy or, in general, transformed in a way that does not preserve the labels. A second example is when part of the data is either unlabeled or has wrong labels.

\textbf{Rotated features.} We consider a simple feed-forward neural network ($Linear(28*28, 512)\rightarrow
            ReLU\rightarrow
            Linear(512, 512)\rightarrow
            ReLU\rightarrow
            Linear(512, 10)$) to classify the MNIST dataset \citep{mnist} , which is the main task $f$. As a helper function $h$, we rotate MNIST images by a certain angle $\in\{0,45,90,180\}$; the rotation plays the role of heterogeneity; it is worth noting that this is not simple data augmentation as numbers ``meanings'' are not conserved under such a transformation. We used a 256 batch size for $f$ and a $64$ batch size for $h$; in our experiments changing the batch size of $h$ to $256$ or $512$ led to similar results. We plot the test accuracy that is obtained using both the naive approach and \textbf{AuxMOM}.
            
First of all, Figure~\ref{fig3} shows that we indeed benefit from using larger values of $K$ up to a certain level (this is predicted by our theory), this suggests that we have somehow succeeded in making a new gradient of $f$ out of each gradient of $h$ we had.
\begin{figure}[h!]
	\centering
	\hspace*{-5pt}
	\includegraphics[scale=0.5]{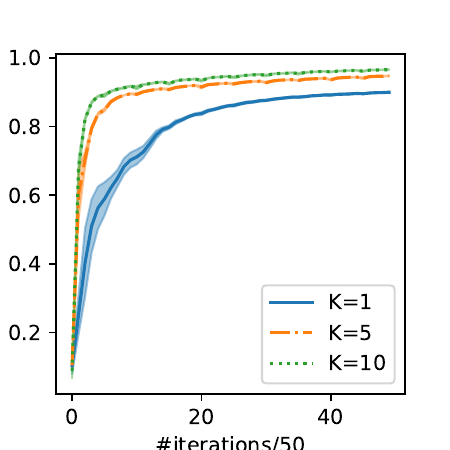}
 \setlength{\belowcaptionskip}{-8pt}
  \caption{\small Effect of $K$ ($K-1$ is the number of times we use the helper h) on the test accuracy of the main task (for an angle = 45). We can see that our approach, as our theory predicts, benefits from bigger values of $K$.}
  \label{fig3}
\end{figure}

\begin{figure}[h!]
	\centering
	\hspace*{-5pt}
	\includegraphics[width=0.49\textwidth ]{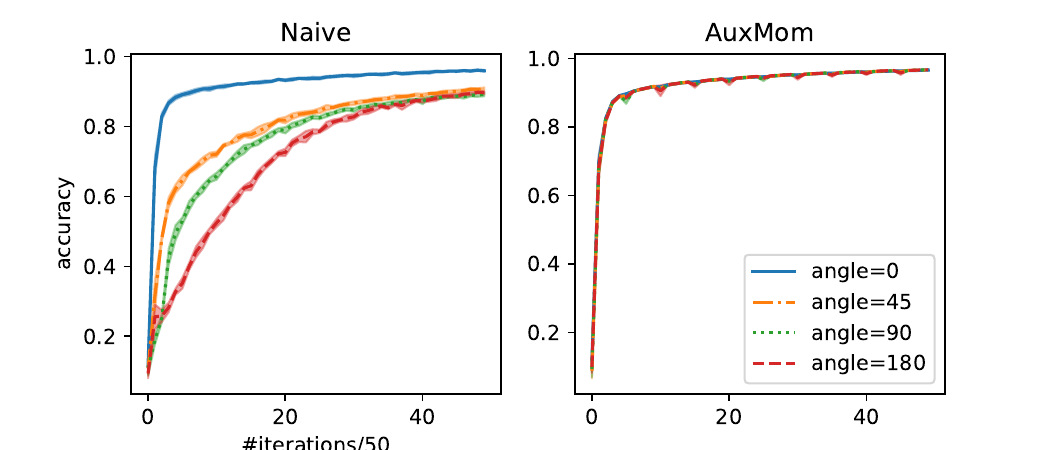}
 \setlength{\belowcaptionskip}{-8pt}
	\caption{\small Test accuracy obtained using different angles as helpers, for $K=10$, step size $\eta=0.01$ and momentum parameter $a=0.1$. We see that, astonishingly, \textbf{AuxMOM} does not suffer much from the change in the angle, whereas, as expected, the bigger the angle, the worse the accuracy on the main task for the naive approach.}
    \label{fig4}
\end{figure}
Next, Figure~\ref{fig4} shows how much the rotation angle affects the performances of the naive approach and AuxMOM. Astonishingly, AuxMOM seems to not suffer from increasing the value of the angle (which we should increase the bias).

Figure~\ref{fig5} shows a comparison with the fine-tuning approach as well.

\begin{figure}[h!]
	\centering
	\hspace*{-5pt}
	\includegraphics[scale=0.5]{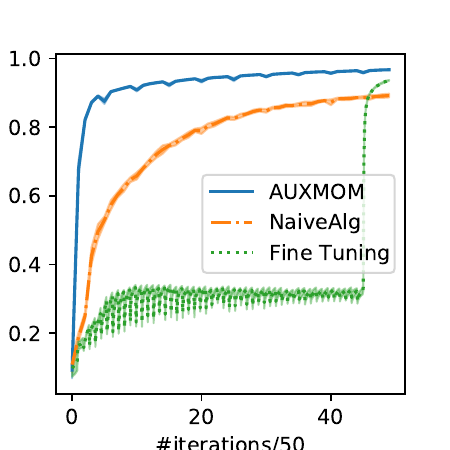}
 \setlength{\belowcaptionskip}{-8pt}
	\caption{\small comparison of The Naive approach, AuxMOM, and Fine Tuning for an angle = 90. Again, we see that while not suffering from the added bias, Fine Tuning is slower than AuxMOM.}
    \label{fig5}
\end{figure}
\textbf{Mislabeled data.} In a similar experiment, the helper $h$ is given again by MNIST images, but this time, we choose a wrong label for each image with a probability $p$. Figure~\ref{fig6} shows the results.
\begin{figure}[h!]
	\centering
	\hspace*{-5pt}
	\includegraphics[width=0.49\textwidth ]{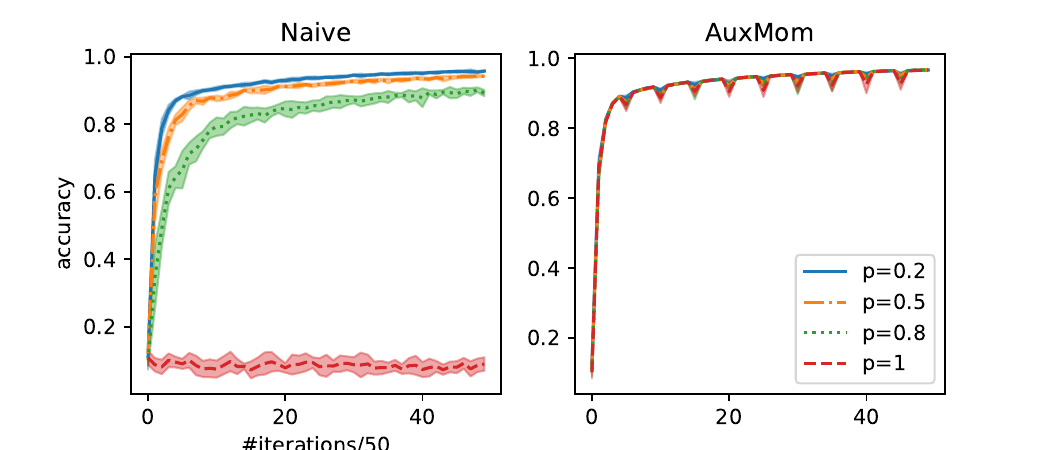}
 \setlength{\belowcaptionskip}{-8pt}
	\caption{\small Test accuracy obtained using different probabilities $p$ as helpers, for $K=10$, step size $\eta=0.01$ and momentum parameter $a=0.1$. Again, astonishingly, \textbf{AuxMOM} does not suffer much from the change in the angle, whereas, as expected, the bigger the angle, the worse the accuracy on the main task.}
    \label{fig6}
\end{figure}
\subsection{Training with Coresets}
\textbf{MNIST experiment.} We consider again the MNIST dataset. This time, we randomly sample a subset that has one-fifth the size of the original training set; this subset will play the role of the coreset.

We compare running SGDm solely on the subset (referred to as Coreset) to training using AuxMOM with the coreset as a helper (referred to as Coreset + AuxMOM). We also consider as baselines running SGDm on the same total amount of work performed by AuxMOM and SGDm on the coreset (referred to as W/O) and running on the same amount of work done by the main function on AuxMOM (referred to as AuxMOM - Coreset).

The results of this experiment are shown in Figure~\ref{coreset}. We can see that Coreset loses around $2\%$ accuracy compared to W/O, whereas Coreset + AuxMOM reaches the same accuracy as W/O (it even beats it a little). Again, we see that AuxMOM bests training alone (AuxMOM - Coreset here).

\begin{figure}
    \centering
    \includegraphics[scale=0.4]{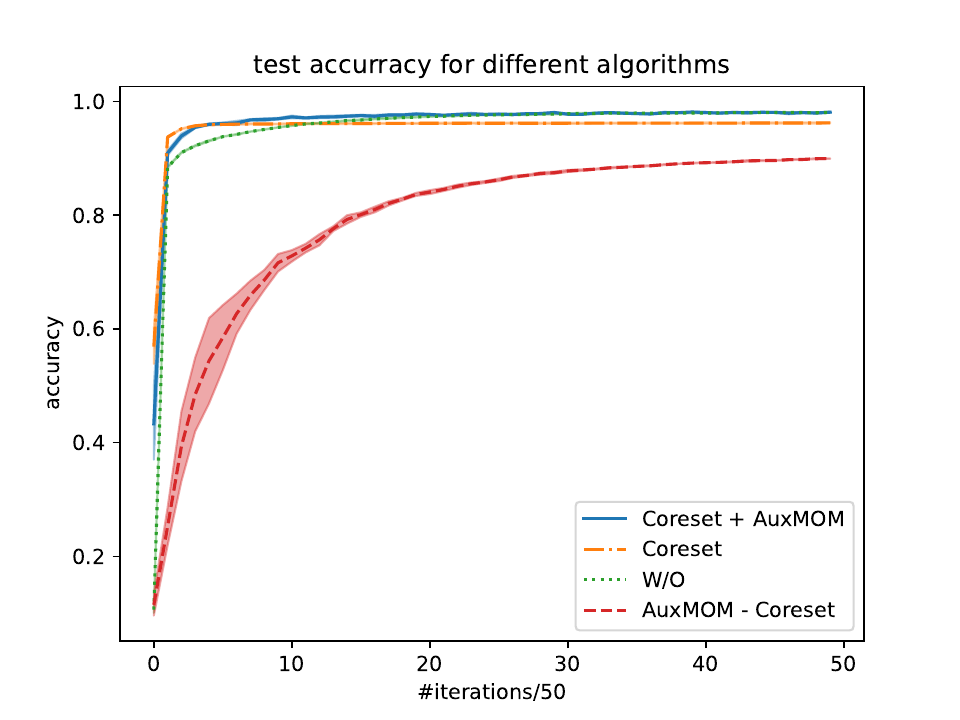}
    \caption{\small Comparing Coresets with and without AuxMOM. We see that with AuxMOM, we reach a higher accuracy, similar to using the true dataset (W/O).}
    \label{coreset}
\end{figure}

\textbf{CIFAR10/100 experiments.} We performed the same experiments on the CIFAR10 and CIFAR100 datasets; again, we considered a randomly sampled coreset one-fifth the size of the original training set. We note that in these experiments, we used simple convolutional neural networks (two convolutions, followed each by a max-pooling layer, followed by three linear layers, to which a ReLu activation is applied, except for the last layer) and a constant step-size strategy, which shouldn't lead to state-of-the-art performance.
\begin{figure}
    \centering
    \includegraphics[scale=0.3]{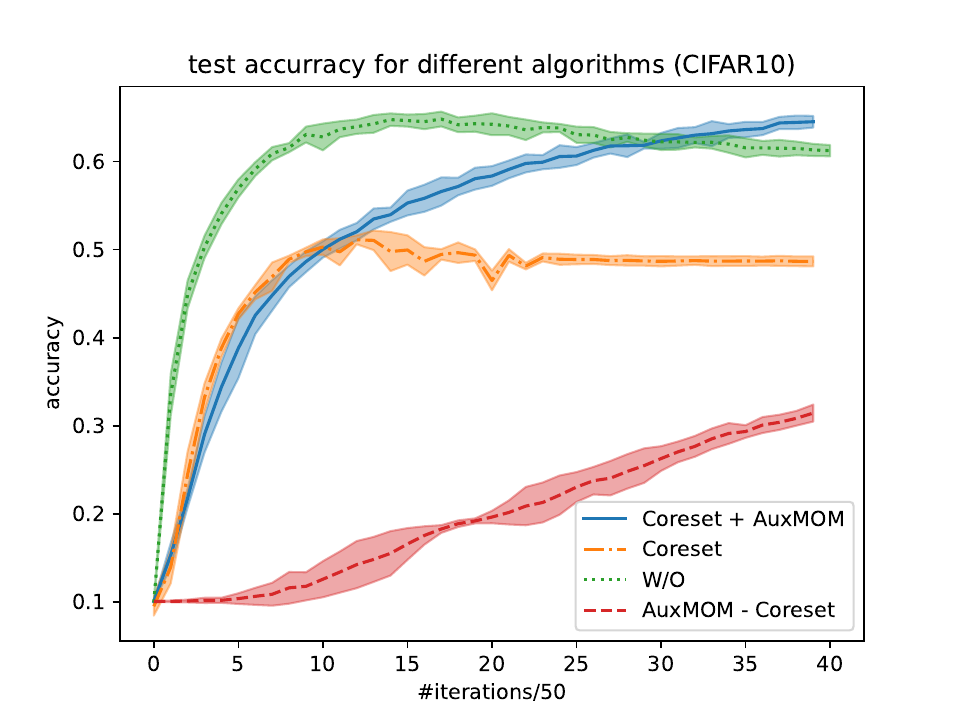}
    \includegraphics[scale=0.3]{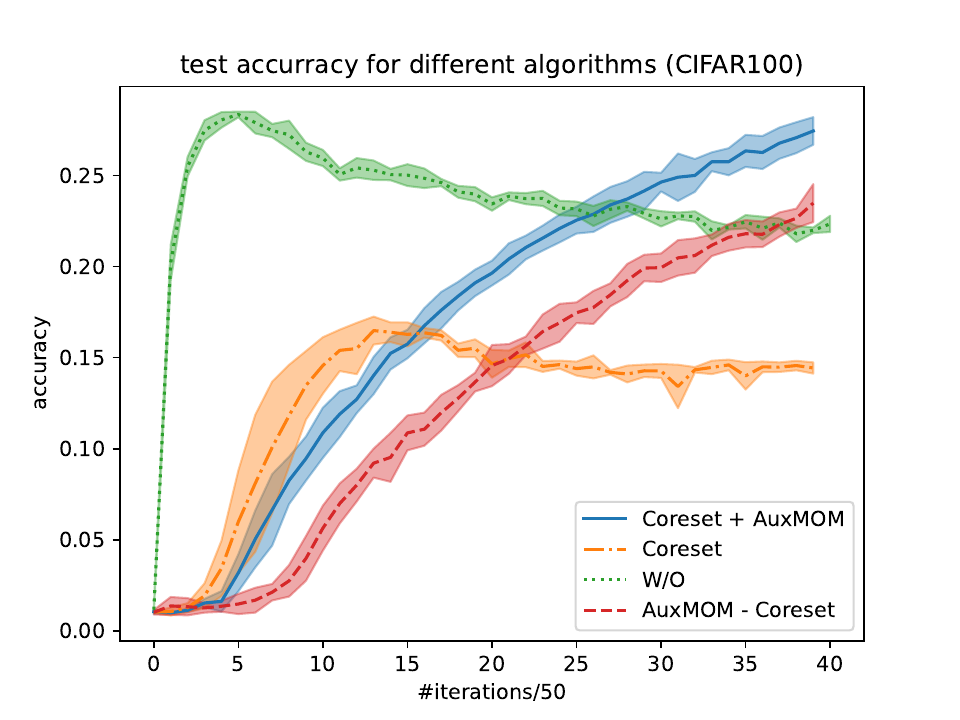}
    \caption{\small CIFAR10 and CIFAR100 experiments. Again, using AuxMOM leads to higher accuracy, similar to using the true dataset (W/O), whereas using the coreset alone leads to a loss in accuracy.}
    \label{fig:CIFARExps}
\end{figure}

The results depicted in Figure~\ref{fig:CIFARExps} show that, similar to the experiment with the MNIST dataset, we observe that by using AuxMOM, we rival the performance of W/O; on the other hand, Coreset loses nearly $10\%$ accuracy.

\textbf{Varying the size of the coreset.} We consider the CIFAR10 dataset and assess the performance of AuxMOM and training on the coreset alone for different coreset sizes ($5\%,10\%$, and $25\%$). Figure~\ref{fig:CoresetSIZE} shows how AuxMOM is only influenced in that it needs more iterations to reach the same accuracy for a smaller coreset size, whereas training on the coreset alone is highly affected by the size of the corset as we see clearly that the smaller this size is, the lower the reached accuracy is.

\begin{figure}
    \centering
    \includegraphics[scale=0.3]{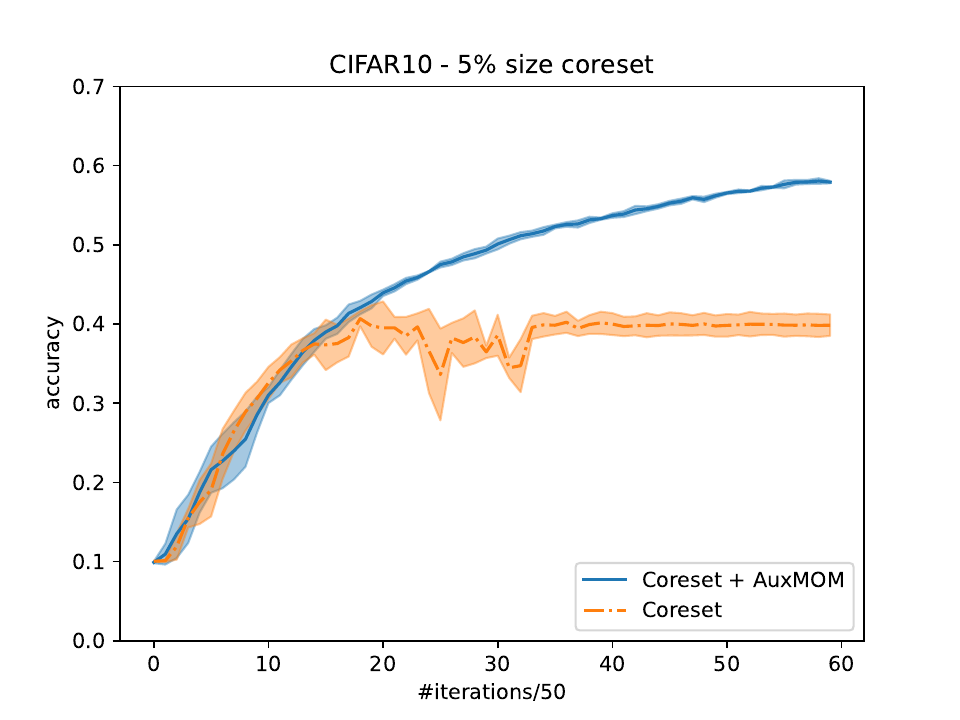}
    \includegraphics[scale=0.3]{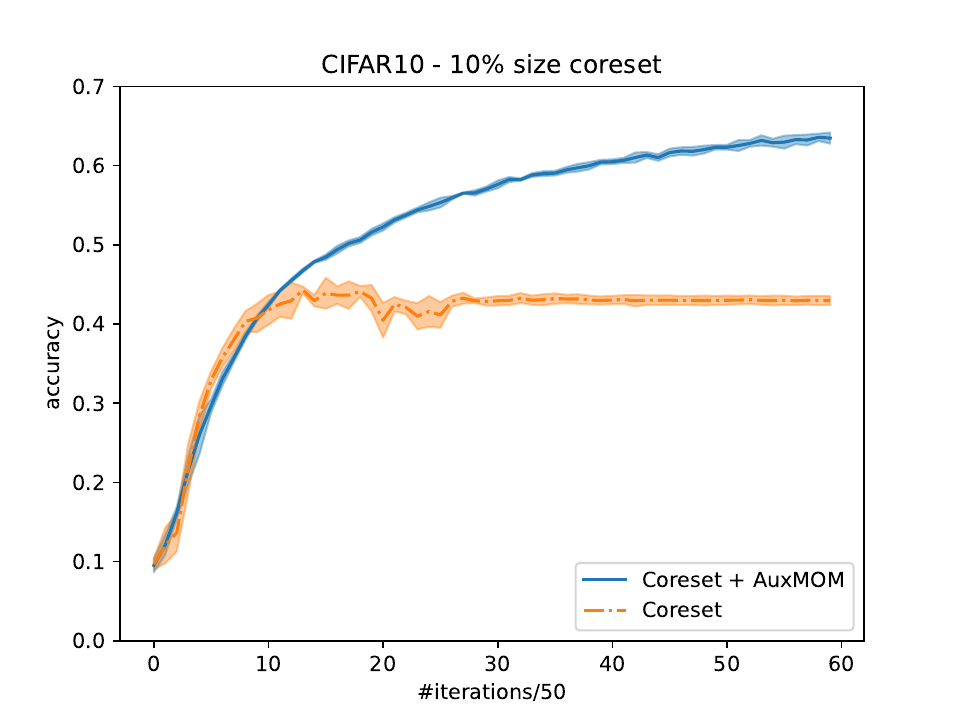}
    \includegraphics[scale=0.3]{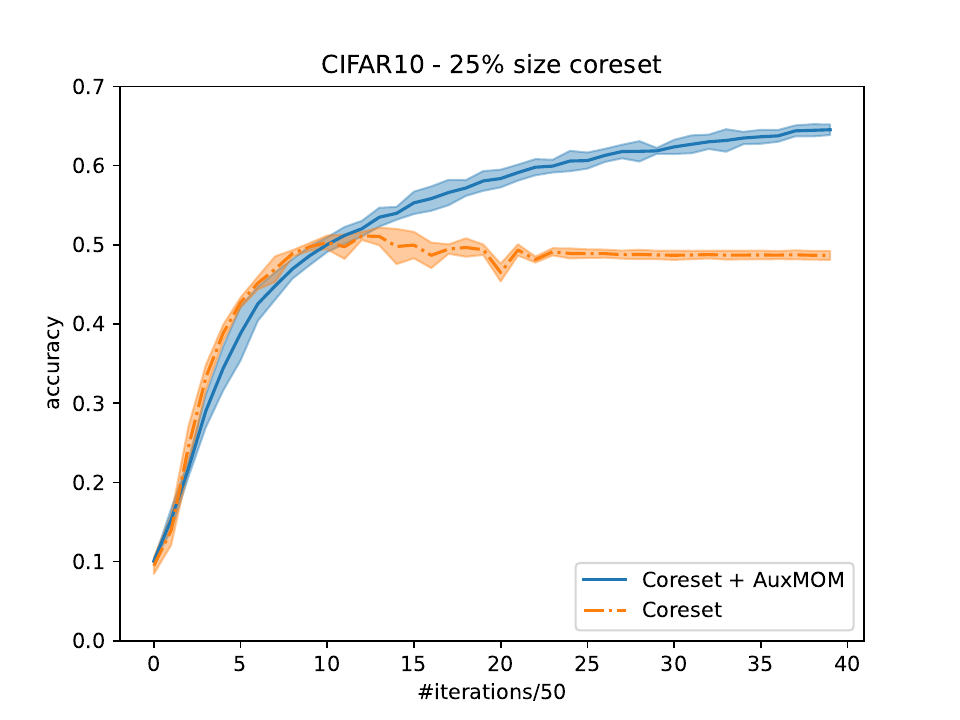}
    \caption{\small Effect of varying the coreset size on AuxMOM and pure coreset training. We see that the smaller the size of the coreset is, the worse the performance of training on the coreset alone; however, AuxMOM is only affected in terms of the number of iterations it needs to reach a given accuracy, which increases as the size of the coreset decreases. }
    \label{fig:CoresetSIZE}
\end{figure}

\textbf{Overhead of AuxMOM over training alone.} The only additional time AuxMOM needs is the time necessary for sampling a new batch from the original dataset, computing its gradient, and updating the momentum used by AuxMOM; it should be negligible in practice.
\subsection{Semi-supervised logistic regression}
We consider a semi-supervised logistic regression task on the ``Mushrooms'' dataset from the libsvmtools repository \citep{CC01a}, which has $8124$ samples, each with $112$ features. We divide this dataset into three equal parts: one for training and, one for testing, and the third one is unlabeled.
In this context, the helper task $h$ is constituted of the unlabeled data to which we assigned random labels. Figure~\ref{fig7} shows indeed \textbf{AuxMOM} accelerates convergence on the training set. More importantly, it also leads to a smaller loss on the test set, which suggests a generalization benefit coming from using the unlabeled set.   

\begin{figure}[h!]
	\centering
	\hspace*{-5pt}
	\includegraphics[width=0.6\textwidth ]{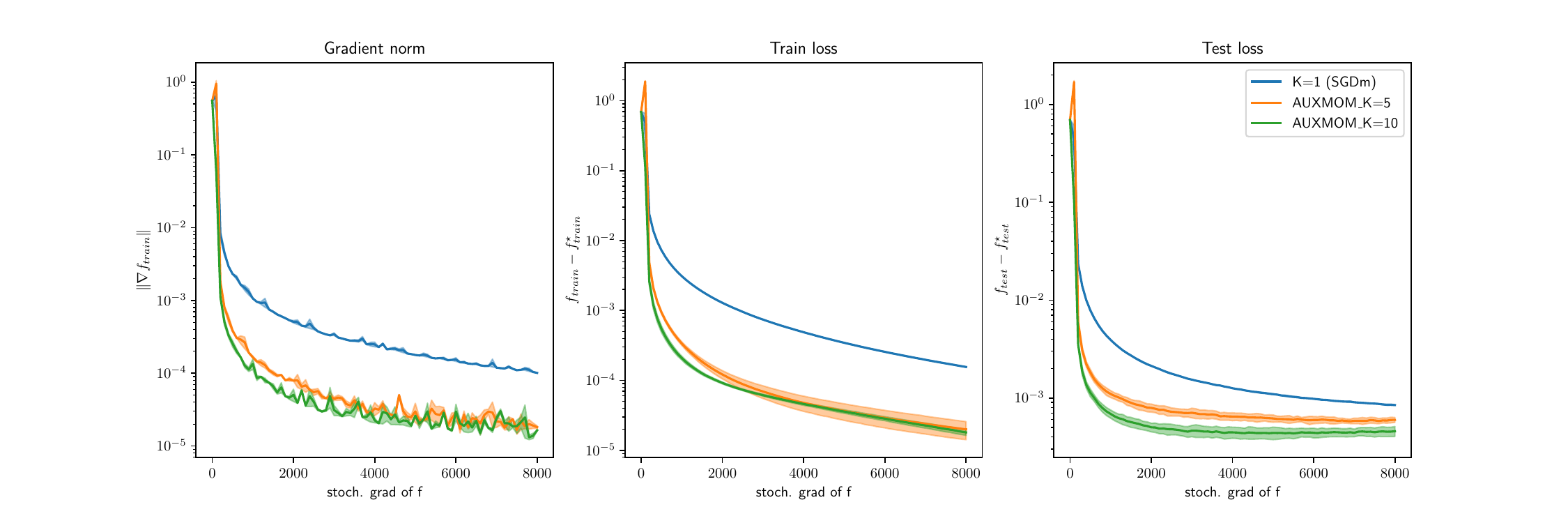}
 \setlength{\belowcaptionskip}{-8pt}
	\caption{\small Gradient norm, train loss, and test loss. We set the parameter $a=0.1$, and the stepsizes were optimized using grid-search for best training loss.}
    \label{fig7}
\end{figure}

\comment{\subsection{AuxMOM vs AuxMVR}
In all our experiments AuxMVR performs slightly better than AuxMOM but the difference is in general very negligible, we think this is the case because we fixed the parameter $a$ for both algorithms and our theory suggests a smaller value of the parameter $a$ for AUXMVR. We present now a linear regression experiment where we observed a difference between both algorithms for the same choice of $a=0.1$. In this experiment, we sample data randomly from a Gaussian distribution (we use $5000$ samples of dimension $100$), for the helper function, we sample from the same distribution and assign labels randomly. Figure~\ref{fig8} shows the results that we obtain.

\begin{figure}[h!]
	\centering
	\hspace*{-5pt}
	\includegraphics[width=0.49\textwidth ]{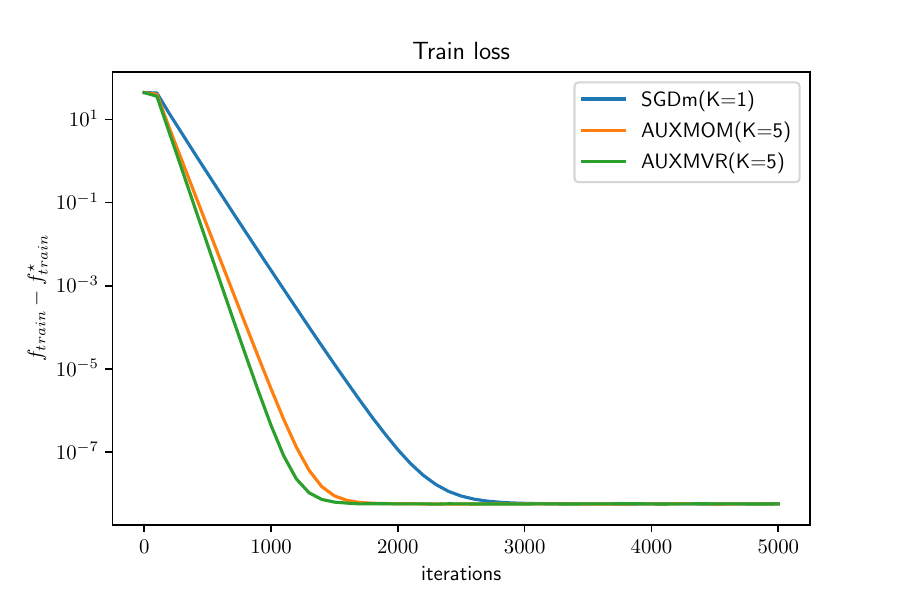}
 \setlength{\belowcaptionskip}{-8pt}
	\caption{\small Training loss of SGDm, AuxMOM, and AUXMVR. We notice that AUXMVR is slightly better than AuxMOM and, in turn, better than SGDm}
    \label{fig8}
\end{figure}
}
\section{Discussion}\label{section6}

\textbf{Fine Tuning.} Fine Tuning uses the helper gradients first and then uses the gradients of $f$; in this sense, fine-tuning has the advantage that it can be used for different functions $f$ without having to go through the first phase each time. AuxMOM seems to beat it (especially for small values of the similarity $\delta$); however, it does not enjoy the same advantage. Can we reconcile both worlds?\\ 
\textbf{Better measures of similarity?} Quantifying similarity between functions of the previous stochastic form is an open problem in Machine Learning that touches many domains such as Federated learning, transfer learning, curriculum learning, and continual learning. Knowing what measure is most appropriate for each case is an interesting problem. In this work, we don't pretend to solve the latter problem. \\
\textbf{Higher order strategies.} We believe our work can be ``easily'' extended to higher order strategies by using proxies of $f-h$ based on higher order derivatives. We expect that, in general, if we use a given Taylor approximation, we will need to make assumptions about its error. For example, if we use a 2nd order Taylor approximation, we expect we will need to bound the difference of the third derivatives of $f$ and $h$; this will be similar to the analysis of the Newton algorithm with cubic regularization\citep{NewtonCubicReg}.\\
\textbf{Dealing with the noise of the snapshot.} In this work, we proposed to deal with the noise of the snapshot gradient of $f-h$ using momentum, other approaches such as using batch sizes of varying sizes with training (typically a batch size that increases as convergence is near) are possible, such an approach was used in SCSG\citep{SCSG}.\\
\textbf{Positively correlated noise.} We showed in this work that we might benefit if we could sample gradients of $h$ that are positively correlated with gradients of $f$, but we did not mention how this can be done. This is an interesting question that we intend to follow in the future.

\section{Conclusion}
We studied the general problem of optimizing a target function with access to a set of potentially decentralized helpers. Our framework is broad enough to recover many machine learning and optimization settings (ignoring the applicability of the similarity assumptions). While there are different ways of solving this problem in general, we proposed two variants \textbf{AuxMOM} and \textbf{AuxMVR} that we showed improve on known optimal convergence rates. We also showed how we could go beyond the bias correction that we have proposed; this can be potentially accomplished by using higher-order approximations of the difference between target and helper functions. Furthermore, we only considered the hessian similarity assumption in this work, but we think it is possible to use other similarity measures depending on the solved problem; finding such measures is outside of the scope of this work, but it might be a good future direction.


\section{Acknowledgements}
The authors would like to express their gratitude to Martin Jaggi and the MLO team at EPFL for their valuable discussions. Additionally, we appreciate the helpful feedback provided by the TMLR reviewers.

\newpage
\bibliographystyle{bibliography}
\bibliography{main}

\appendix
\section{Code}

The code for our experiments is available at \url{https://github.com/elmahdichayti/OptAuxInf}.

\section{Basic lemmas}
\begin{lemma}\label{A.1}
$\forall \va,\vb\in\R^d,\, c>0\: :\:\norm{\va + \vb}^2 \leq (1+c) \norm{\va}^2 + (1+\dfrac{1}{c}) \norm{\vb}^2\, .$
\end{lemma}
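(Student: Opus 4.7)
The plan is to reduce this to the scalar Young / Peter--Paul inequality applied to the cross term arising from expanding the squared norm. First I would expand the left-hand side using bilinearity of the inner product to get
\[
\norm{\va+\vb}^2 \;=\; \norm{\va}^2 + 2\langle \va,\vb\rangle + \norm{\vb}^2.
\]
The only nontrivial ingredient is the cross term $2\langle \va,\vb\rangle$, which I would bound using the nonnegativity of the norm of a rescaled difference: for any $c>0$,
\[
0 \;\leq\; \Bigl\lVert \sqrt{c}\,\va - \tfrac{1}{\sqrt{c}}\,\vb \Bigr\rVert_2^{\,2} \;=\; c\norm{\va}^2 - 2\langle \va,\vb\rangle + \tfrac{1}{c}\norm{\vb}^2,
\]
which rearranges to $2\langle \va,\vb\rangle \leq c\,\norm{\va}^2 + \tfrac{1}{c}\norm{\vb}^2$. (Equivalently one can invoke AM--GM on $\bigl|\langle\sqrt c\,\va,\tfrac{1}{\sqrt c}\vb\rangle\bigr|$ via Cauchy--Schwarz.)

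Substituting this bound back into the expansion yields
\[
\norm{\va+\vb}^2 \;\leq\; \norm{\va}^2 + c\,\norm{\va}^2 + \tfrac{1}{c}\norm{\vb}^2 + \norm{\vb}^2 \;=\; (1+c)\norm{\va}^2 + \bigl(1+\tfrac{1}{c}\bigr)\norm{\vb}^2,
\]
which is the desired inequality. There is no real obstacle here: the lemma is a one-line consequence of expanding the square and bounding the cross term, and the only ``choice'' is the split parameter $c>0$, which is left free because different downstream applications in the paper will want to tune it (typically to balance a bias term against a variance-like term).
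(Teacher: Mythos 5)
Your proof is correct and uses essentially the same argument as the paper: the paper observes directly that the difference between the right- and left-hand sides equals $\norm{\sqrt{c}\,\va - \tfrac{1}{\sqrt{c}}\vb}^2 \geq 0$, which is exactly the nonnegative quantity you use to bound the cross term. The two presentations are algebraically identical.
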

\begin{proof}
The difference of the two quantities above is exactly $\norm{\sqrt{c}\va - \frac{1}{\sqrt{c}}\vb}^2\geq 0\, .$
\end{proof}
\begin{lemma}\label{A.2}
$\forall N\in\mathrm{N}, \forall \va_1,\ldots,\va_N\in\R^d\: :\:\norm{\sum_{i=1}^N\va_i}^2 \leq N \sum_{i=1}^N\norm{\va_i}^2\, .$
\end{lemma}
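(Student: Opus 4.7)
The plan is to expand the squared norm as a double sum over pairs and bound each cross term using the elementary Young-type inequality $2\langle \va_i, \va_j\rangle \le \|\va_i\|^2 + \|\va_j\|^2$. This is essentially the multivariate generalization of Lemma~\ref{A.1} and reduces to a purely algebraic manipulation.

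Concretely, I would first write
\[
    \Big\|\sum_{i=1}^N \va_i\Big\|^2 = \sum_{i=1}^N\sum_{j=1}^N \langle \va_i, \va_j\rangle.
\]
Then I would apply $\langle \va_i,\va_j\rangle \le \tfrac{1}{2}(\|\va_i\|^2 + \|\va_j\|^2)$ to every term (which follows from expanding $\|\va_i-\va_j\|^2 \ge 0$, or alternatively from Cauchy--Schwarz combined with AM--GM). Substituting this bound and splitting the double sum gives
\[
    \sum_{i,j=1}^N \langle \va_i,\va_j\rangle \le \tfrac{1}{2}\sum_{i,j=1}^N \|\va_i\|^2 + \tfrac{1}{2}\sum_{i,j=1}^N \|\va_j\|^2 = N\sum_{i=1}^N \|\va_i\|^2,
\]
since each inner sum contributes a factor of $N$.

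An equivalent one-line alternative would be to invoke Cauchy--Schwarz in $\mathbb{R}^N$ with the weight vector $(1,\ldots,1)$, writing $\|\sum_i \va_i\|^2 \le (\sum_i 1^2)(\sum_i \|\va_i\|^2)$, or to use convexity of $\vx \mapsto \|\vx\|^2$ applied to the average $\frac{1}{N}\sum_i \va_i$ and then multiply through by $N^2$. There is no real obstacle here; the only choice is stylistic, and I would pick the pairwise expansion since it mirrors the proof of Lemma~\ref{A.1} already given just above in the appendix.
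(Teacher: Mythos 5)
Your proof is correct and complete: the expansion of the squared norm into $\sum_{i,j}\langle \va_i,\va_j\rangle$ followed by the bound $\langle \va_i,\va_j\rangle \le \tfrac{1}{2}(\|\va_i\|^2+\|\va_j\|^2)$ yields exactly $N\sum_i\|\va_i\|^2$, and the Cauchy--Schwarz and convexity alternatives you mention are equally valid. The paper itself states this lemma without any proof, so there is nothing to compare against; your argument (in any of its three variants) fills that gap correctly.
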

\begin{lemma}
It is common knowledge that if $f$ is $L$-smooth, .i.e. satisfies A\ref{A1} then :$$\forall \vx,\vy\in \R^d \; : f(\vy) - f(\vx)\leq \nabla f(\vx)^\top (\vy - \vx) + \frac{L}{2} \norm{\vy - \vx}^2\, .$$
\end{lemma}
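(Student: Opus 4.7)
The plan is to prove the classical descent lemma by reducing the multivariable inequality to a one-dimensional integral along the segment joining $\vx$ and $\vy$, and then controlling the integrand using the Lipschitz property of the gradient.

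First I would define the auxiliary scalar function $g(t) := f(\vx + t(\vy - \vx))$ on $t \in [0,1]$. Since $f$ is differentiable (smoothness implies this), $g$ is differentiable with $g'(t) = \nabla f(\vx + t(\vy - \vx))^\top (\vy - \vx)$. By the fundamental theorem of calculus, $f(\vy) - f(\vx) = g(1) - g(0) = \int_0^1 \nabla f(\vx + t(\vy - \vx))^\top (\vy - \vx)\, dt$. This rewrites the left-hand side as a line integral of the gradient, which is the natural representation for leveraging the Lipschitz hypothesis.

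Next I would subtract the linear term from both sides, obtaining
\[
f(\vy) - f(\vx) - \nabla f(\vx)^\top (\vy - \vx) = \int_0^1 \bigl(\nabla f(\vx + t(\vy - \vx)) - \nabla f(\vx)\bigr)^\top (\vy - \vx)\, dt.
\]
Applying Cauchy--Schwarz inside the integrand and then Assumption~\ref{A1} with points $\vx + t(\vy-\vx)$ and $\vx$ gives
\[
\bigl\|\nabla f(\vx + t(\vy - \vx)) - \nabla f(\vx)\bigr\| \,\|\vy - \vx\| \leq L \cdot t\|\vy - \vx\| \cdot \|\vy - \vx\| = L t \|\vy - \vx\|^2.
\]
Integrating $Lt\|\vy - \vx\|^2$ over $t \in [0,1]$ yields exactly $\tfrac{L}{2}\|\vy-\vx\|^2$, which rearranges to the claimed inequality.

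There is essentially no serious obstacle here — the only subtlety is invoking the chain rule to justify differentiating $g$, which is immediate from the continuous differentiability implicit in Assumption~\ref{A1}, and ensuring the inequality inside the integral can be passed through (which follows from monotonicity of the integral for nonnegative integrands). The proof is a standard two-line computation once the line-segment parametrization and Cauchy--Schwarz are in place.
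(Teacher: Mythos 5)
Your proof is correct and complete: the paper itself gives no proof of this lemma (it is stated as ``common knowledge''), and your argument --- parametrizing the segment, applying the fundamental theorem of calculus, then Cauchy--Schwarz and the Lipschitz bound on the gradient, and integrating $Lt$ to obtain the factor $\tfrac{L}{2}$ --- is exactly the standard derivation that the paper implicitly relies on. Nothing is missing; the differentiability needed for the chain rule and FTC is indeed guaranteed by Assumption~\ref{A1}, as you note.
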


\section{Missing proofs}
\subsection{Naive approach}\label{NA} As a reminder, the naive approach uses one unbiased gradient of $f$ at the beginning of each cycle followed by $K-1$ unbiased gradients of $h$. 
\begin{algorithm}[h]
\begin{algorithmic}
\REQUIRE $\vx_0$,$\vm^0$, $\eta$, $T$, $K$
\FOR{$t=1$ to $T$}
\STATE  Sample $\xi^t_f$;\:compute $\vg_f(\vx^{t-1},\xi^t_f)$
\STATE $\vm^t = \vg_f(\vx^{t-1},\xi^t_f)$
\STATE $\vy^t_0 = \vx^{t-1}$
\FOR{$k=0$ to $K-1$}
\IF{k=0}
\STATE $\vd^t_k = \vm^t$
\ELSE
\STATE Sample $\xi^{t,k}_h$;\: Compute $\vg_h(\vy^{t}_k,\xi^{t,k}_h)$
\STATE $\vd^t_k = \vg_h(\vy^{t}_k,\xi^{t,k}_h)$
\ENDIF
\STATE $\vy^t_{k+1} = \vy^t_{k} - \eta \vd^t_k$
\ENDFOR
\STATE Update $\vx^t  = \vy^t_{K}$
\ENDFOR
\end{algorithmic}
\caption{Naive$(f,h)$} 
\label{alg:Naive}
\end{algorithm}

We prove the following theorem :

\begin{framedtheorem}
Under Assumptions~\ref{A1},\ref{A2},\ref{BGD}. Starting from $\vx^0$ and using a step size $\eta$ we have :
$$\frac{1}{2KT}\sum_{t=1}^T\E[\norm{\nabla f(\vx^{t-1})}^2] + \frac{1 - m}{2KT}\sum_{t=1}^T\sum_{k=2}^K\E[\norm{\nabla f(\vy^{t}_{k-1})}^2] \leq \frac{F^0}{KT\eta} + \frac{L\sigma^2}{2}\eta + \frac{K-1}{K}\zeta^2\, .$$
For $\sigma^2 = \frac{\sigma_f^2 + (K - 1)\sigma_h^2 }{K}$ the average variance, $F^0 = \E[f(\vx^0)] - f^\star$.
\end{framedtheorem}

\begin{proof}
The proof is based on biased SGD theory. From \citep{BiasedSGD} we have that if we the gradients $\vg(\vx)$ that we are using are such that :
$$\E[\vg(\vx)] = \nabla f(\vx) + \vb(\vx)$$
For a target function $f$ and a gradient bias $\vb(\vx)$. Then we have for $\eta\leq \frac{1}{L}$:
$$f(\vy^{t}_{k+1}) - f(\vy^{t}_{k}) \leq \frac{\eta}{2}\big(-\norm{\nabla f(\vy^{t}_{k})}^2 + \norm{\vb(\vy^{t}_{k})}^2 \big) + \frac{L\eta^2}{2}\sigma_k^2$$

In our case, by Assumption\ref{BGD} : $\norm{\vb(\vy^{t}_{k})}^2\leq \big(\zeta^2 + m\norm{\nabla f(\vy^{t}_{k})}^2\big) 1_{k>0}$ and $\sigma_k^2 = 1_{k=0}\sigma_f^2 + 1_{k>0} \sigma_h^2$.\\
Summing the above inequality from $t=1,k=0$ to $t=T,k=K-1$ and then dividing by $KT$ we get the statement of the theorem.
\end{proof}
\paragraph{Lower bound.} It is not difficult to prove that we cannot do better using the naive strategy. We can for example pick in $1d$  a target function $f(x) = \frac{1}{2}x^2$ and an auxiliary function $h(x)= \frac{1}{2}(x - \zeta)^2$. Both functions are $1$-smooth and satisfy hessian similarity with $\delta=0$, however, $\zeta$ is not (necessarily zero), in particular, this shows that hessian similarity (Assumption\ref{A3}) and gradient dissimilarity (Assumption\ref{BGD}) are orthogonal. To show the lower bound we can consider the perfect case where we have access to full gradients of $f$ and $h$.

The dynamics of the naive approach can be written in the form:$$\vx^{t+1} = (1 - \eta)\vx^t + \eta \zeta 1_{t\neq 0 mod(K)}$$
Which implies 
\begin{align*}
    \vx^{t} &= (1 - \eta)^t\vx^0 +  \zeta \sum_{i=0}^{t-1}\eta(1-\eta)^{t-i-1}1_{i\neq 0 mod(K)}\\
    &=(1 - \eta)^t\vx^0 +  \Omega(\zeta)
\end{align*}
This sequence does not converge to zero no matter the choice of $\eta<1$.

\subsection{Momentum with auxiliary information}\label{AUXMOMC2}
\subsubsection{Algorithm description}\label{App Momentum}
The algorithm that we proposed proceeds in cycles, at the beginning of each cycle we have $\vx^{t-1}$ and take $K$ iterations of the form $\vy^t_k = \vy^t_{k-1} - \eta \vd^t_k$ where $\vy^t_0 = \vx^{t-1}$, $\vd^t_k  =  \vg_h(\vy^t_{k-1},\xi^t_{h,k}) + \vm^t$ and $\vm^t  =  (1-a)\vm^{t-1} + a \vg_{f-h}(\vx^{t-1},\xi^{t-1}_{f-h})$. Then we set $\vx^t = \vy^t_K$.

\begin{algorithm}[h]
\begin{algorithmic}
\REQUIRE $\vx_0$,$\vm^0$, $\eta$, $a$, $T$, $K$
\FOR{$t=1$ to $T$}
\STATE  Sample $\xi^t_{f-h}$;\:compute $\vg_{f-h}(\vx^{t-1},\xi^{t-1}_{f-h})$
\STATE $\vm^t = (1-a)\vm^{t-1} + a \vg_{f-h}(\vx^{t-1},\xi^{t-1}_{f-h})$
\STATE $\vy^t_0 = \vx^{t-1}$
\FOR{$k=0$ to $K-1$}
\STATE Sample $\xi^{t,k}_h$;\: Compute $\vg_h(\vy^{t}_k,\xi^{t,k}_h)$
\STATE $\vd^t_k = \vg_h(\vy^{t}_k,\xi^{t,k}_h) + \vm^t$
\STATE $\vy^t_{k+1} = \vy^t_{k} - \eta \vd^t_k$
\ENDFOR
\STATE Update $\vx^t  = \vy^t_{K}$
\ENDFOR
\end{algorithmic}
\caption{AuxMOM$(f,h)$} 
\label{alg:AUXMOM}
\end{algorithm}

We will introduce $\Bar{\vd}^t_k  =  \nabla h(\vy^t_{k-1}) + \vm^t$ and we note that $\E[\vd^t_k|\vy^t_{k-1}] = \Bar{\vd}^t_k$ and $\E[\norm{\vd^t_k - \Bar{\vd}^t_k}^2]\leq \sigma_h^2$. We will be using this fact and sometimes we will condition on $\vy^t_{k-1}$ implicitly to replace $\vd^t_k$ by $\Bar{\vd}^t_k$.

\textbf{Note.} Algorithm~\ref{alg:AUXMOM2} is another version of AuxMOM that works very well in practice and seems more robust to the choice of the step size than Algorithm~\ref{alg:AUXMOM}. The main difference with Algorithm~\ref{alg:AUXMOM} is that the momentum is applied on $f$ only. For this Algorithm, we could not prove any benefit from the similarity $\delta$.
\begin{algorithm}[h]
\begin{algorithmic}
\REQUIRE $\vx_0$,$\vm^0$, $\eta$, $a$, $T$, $K$
\FOR{$t=1$ to $T$}
\STATE  Sample $\xi^t_{f}$;\:compute $\vg_{f}(\vx^{t-1},\xi^{t-1}_{f})$
\STATE $\vm^t = (1-a)\vm^{t-1} + a \vg_{f}(\vx^{t-1},\xi^{t-1}_{f})$
\STATE $\vy^t_0 = \vx^{t-1}$
\FOR{$k=0$ to $K-1$}
\STATE Sample $\xi^{t,k}_h$;\: Compute $\vg_h(\vy^{t}_k,\xi^{t,k}_h)$,$\vg_h(\vx^{t-1},\xi^{t,k}_h)$
\STATE $\vd^t_k = \vg_h(\vy^{t}_k,\xi^{t,k}_h) - \vg_h(\vx^{t-1},\xi^{t,k}_h) + \vm^t$
\STATE $\vy^t_{k+1} = \vy^t_{k} - \eta \vd^t_k$
\ENDFOR
\STATE Update $\vx^t  = \vy^t_{K}$
\ENDFOR
\end{algorithmic}
\caption{AuxMOM$(f,h)-V0$} 
\label{alg:AUXMOM2}
\end{algorithm}
\subsubsection{Convergence rate}
We prove the following theorem that gives the convergence rate of this algorithm in the non-convex case.
\newpage
\begin{framedtheorem}
Under assumptions A\ref{A1}, \ref{A2},\ref{A3}. For $a = 36 \delta K\eta$ and 
$\eta = \min(\frac{1}{L},\frac{1}{144\delta K}, \sqrt{\frac{\Tilde{F}}{128L\beta KT\sigma_f^2}})\, .$
This choice gives us the rate :
$$\frac{1}{8KT}\sum_{t=1}^T\sum_{k=0}^{K-1} \E\big[\norm{\nabla f(\vy^t_k)}^2\big] \leq 24 \sqrt{\frac{L\beta\Tilde{F}\sigma_f^2}{T}} + \frac{(L + 192 \delta K)\Tilde{F}}{KT}+ \frac{\sigma_h^2}{6K}\, .$$
where $\Tilde{F} = F^{0} + \frac{E^0}{8 \delta}$ and $\beta = \frac{\delta}{L}\bigg(\frac{\sigma_{f-h}^2}{\sigma_f^2} + \frac{1}{18K}\frac{\sigma_h^2}{\sigma_f^2}\bigg) + \frac{1}{288K}\frac{\sigma_h^2}{\sigma_f^2}$, $F^0 = f(\vx^0) - f^\star$ and $E^0 = \E[\norm{\vm^0 - \nabla f(\vx^0) + \nabla h(\vx^0)}^2]$.

Furthermore, if we use a batch-size  $T$ times bigger for computing an estimate $\vm^0$ of $\nabla f(\vx^0) - \nabla h(\vx_0)$, then by taking $a = \max(\frac{1}{T},36\delta K\eta)$ and replacing $\Tilde{F}$ by $F^0$ in the expression of $\eta$, we get :
$$\eta = \min(\frac{1}{L},\frac{1}{192\delta K}, \sqrt{\frac{F^0}{144L\beta K^2T\sigma_f^2}})\, .$$
we get the following :
$$\frac{1}{8KT}\sum_{t=1}^T\sum_{k=0}^{K-1} \E\big[\norm{\nabla f(\vy^t_k)}^2\big] \leq 24 \sqrt{\frac{L\beta F^0\sigma_f^2}{T}} + \frac{(L + 192 \delta K)F^0}{KT} + \frac{8\sigma_{f-h}^2}{T} + \frac{\sigma_h^2}{6K}\, .$$
\end{framedtheorem}

\subsubsection{Bias in helpers updates and Notation}
\begin{lemma} Under the $\delta$-Bounded Hessian Dissimilarity assumption, we have :\\ 
$\forall \vx, \vy \in \R^d \; $ : $\|\nabla h(\vy) - \nabla h(\vx) - \nabla f(\vy) + \nabla f(\vx)\|^2 \leq \delta^2 \|\vy - \vx\|^2\, .$
\end{lemma}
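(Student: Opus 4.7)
The plan is to derive the gradient difference bound from the Hessian bound by integrating along the line segment between $\vx$ and $\vy$. Define the auxiliary function $\phi : [0,1] \to \R^d$ by $\phi(t) := \nabla h\bigl(\vx + t(\vy-\vx)\bigr) - \nabla f\bigl(\vx + t(\vy-\vx)\bigr)$. The quantity we want to bound is exactly $\|\phi(1) - \phi(0)\|$.

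By the fundamental theorem of calculus applied componentwise, $\phi(1) - \phi(0) = \int_0^1 \phi'(t)\, dt$, and the chain rule gives $\phi'(t) = \bigl[\nabla^2 h(\vx_t) - \nabla^2 f(\vx_t)\bigr](\vy - \vx)$ where $\vx_t := \vx + t(\vy - \vx)$. Taking norms, pulling the norm inside the integral via the triangle inequality, and bounding the spectral norm of the Hessian difference by $\delta$ using Assumption~\ref{A3}, I get
\begin{equation*}
\|\phi(1) - \phi(0)\| \;\leq\; \int_0^1 \bigl\|\nabla^2 h(\vx_t) - \nabla^2 f(\vx_t)\bigr\|\, \|\vy - \vx\|\, dt \;\leq\; \delta\, \|\vy - \vx\|.
\end{equation*}
Squaring both sides yields the claim.

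The proof is essentially a one-liner, so there is no real obstacle; the only thing to be slightly careful about is justifying the interchange of norm and integral (standard Bochner / componentwise argument) and confirming that the matrix norm appearing in Assumption~\ref{A3} is the operator norm so that $\|(\nabla^2 h - \nabla^2 f)(\vx_t)(\vy-\vx)\| \leq \delta\|\vy-\vx\|$ is immediate. If one prefers to avoid integrals, an equivalent route is to note that Assumption~\ref{A3} says the map $\vz \mapsto \nabla h(\vz) - \nabla f(\vz)$ has Jacobian bounded in operator norm by $\delta$, hence is $\delta$-Lipschitz, which gives the statement directly.
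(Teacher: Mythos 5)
Your proof is correct and is exactly the standard argument the paper has in mind: the paper's own ``proof'' is just the one-line remark that this is ``a simple application'' of Assumption~\ref{A3}, and integrating $\bigl[\nabla^2 h - \nabla^2 f\bigr]$ along the segment from $\vx$ to $\vy$ is the canonical way to carry that out. No gaps; nothing further needed.
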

\begin{proof}
A simple application of the $\delta$-Bounded Hessian Dissimilarity assumption.
\end{proof}

\textbf{Notation :} We will use the following notations : $\ve^t = \vm^t - \nabla f(\vx^{t-1}) + \nabla h(\vx^{t-1})$ to denote the momentum error and $E^t = \E[\norm{\ve^t}^2]$ to denote its expected squared norm. $\Delta^t_k = \E[\|\vy^t_k - \vx^{t-1}\|^2]$ will denote the progress made up to the k-th round of the t-th cycle, and for the progress in a whole cycle we will use $\Delta^t = \E[\|\vx^{t} - \vx^{t-1}\|^2]$. We also denote $\Bar{\vd}^t_k= \vd^t_k - \vg_h(\vy^{t}_k,\xi^{t,k}_h) + \nabla h(\vy^t_{k-1}) = \nabla h(\vy^t_{k-1}) + \vm^t.$

\subsubsection{Change during each cycle}
\textbf{Variance of $\Bar{\vd}^t_k$.} We would like $\Bar{\vd}^t_k$ to be $\nabla f(\vy^t_{k-1})$, but it is not. In the next lemma, we control the error resulting from these two quantities being different.

\begin{lemma}\label{lemmaVD}
Under assumption A\ref{A3}, we have the following inequality :
\[\E[\norm{\Bar{\vd}^t_k - \nabla f(\vy^t_{k-1})}^2] \leq 2\delta^2 \Delta^t_{k-1} + 2 E^t\]
\end{lemma}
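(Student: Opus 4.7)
The plan is to expand $\Bar{\vd}^t_k - \nabla f(\vy^t_{k-1})$ using the definitions and rearrange so that two familiar quantities appear: the Hessian-dissimilarity error at the two points $\vy^t_{k-1}$ and $\vx^{t-1}$, and the momentum error $\ve^t$. Concretely, by definition $\Bar{\vd}^t_k = \nabla h(\vy^t_{k-1}) + \vm^t$, and $\vm^t = \ve^t + \nabla f(\vx^{t-1}) - \nabla h(\vx^{t-1})$, so
\[
\Bar{\vd}^t_k - \nabla f(\vy^t_{k-1}) = \bigl(\nabla h(\vy^t_{k-1}) - \nabla h(\vx^{t-1})\bigr) - \bigl(\nabla f(\vy^t_{k-1}) - \nabla f(\vx^{t-1})\bigr) + \ve^t.
\]

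Next I would apply Lemma~\ref{A.2} with $N=2$ (i.e.\ $\|\va+\vb\|^2\le 2\|\va\|^2+2\|\vb\|^2$) to split the right-hand side into the Hessian-dissimilarity term and the momentum-error term. The Hessian-dissimilarity term is then bounded by the lemma immediately preceding this one: under Assumption~\ref{A3},
\[
\|\nabla h(\vy^t_{k-1}) - \nabla h(\vx^{t-1}) - \nabla f(\vy^t_{k-1}) + \nabla f(\vx^{t-1})\|^2 \le \delta^2 \|\vy^t_{k-1} - \vx^{t-1}\|^2.
\]
The momentum-error term is exactly $\|\ve^t\|^2$ by definition.

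Taking expectations and recalling $\Delta^t_{k-1} = \E[\|\vy^t_{k-1}-\vx^{t-1}\|^2]$ and $E^t = \E[\|\ve^t\|^2]$ gives the claimed bound
\[
\E[\|\Bar{\vd}^t_k - \nabla f(\vy^t_{k-1})\|^2] \le 2\delta^2 \Delta^t_{k-1} + 2E^t.
\]

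There is no real obstacle here: the whole proof is a one-line add-and-subtract (introducing $\pm \nabla h(\vx^{t-1})$ and $\pm \nabla f(\vx^{t-1})$ implicitly through the definition of $\ve^t$), followed by a Young-type inequality and the previously established Hessian-dissimilarity bound. The only thing to be careful about is making sure the decomposition groups the two ``drift'' gradient differences together so that the Hessian-similarity lemma applies in one shot, rather than bounding $\nabla h(\vy^t_{k-1})-\nabla h(\vx^{t-1})$ and $\nabla f(\vy^t_{k-1})-\nabla f(\vx^{t-1})$ separately (which would give a weaker $L^2$ factor instead of $\delta^2$).
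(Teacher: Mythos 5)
Your proof is correct and follows essentially the same route as the paper: the identical add-and-subtract decomposition into the grouped drift term $(\nabla h(\vy^t_{k-1}) - \nabla h(\vx^{t-1})) - (\nabla f(\vy^t_{k-1}) - \nabla f(\vx^{t-1})) + \ve^t$, a factor-of-two splitting inequality (the paper invokes Lemma~\ref{A.1} with $c=1$, which is the same bound as your Lemma~\ref{A.2} with $N=2$), and the Hessian-similarity lemma applied to the grouped difference. Your closing remark about grouping the drift terms to get $\delta^2$ rather than $L^2$ is exactly the point of the decomposition.
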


\begin{proof}
We have 
\begin{align*}
    \Bar{\vd}^t_k - \nabla f(\vy^t_{k-1}) &= \nabla h(\vy^t_{k-1}) - \nabla f(\vy^t_{k-1}) + \vm^t \\
    &= h(\vy^t_{k-1}) - \nabla f(\vy^t_{k-1}) - \nabla f(\vy^t_{k-1}) + \nabla f(\vx^{t-1}) + \vm^t - \nabla f(\vx^{t-1}) + \nabla h(\vx^{t-1})\\
    &= h(\vy^t_{k-1}) - \nabla f(\vy^t_{k-1}) - \nabla f(\vy^t_{k-1}) + \nabla f(\vx^{t-1}) + \ve^t
\end{align*}
Using Lemma \ref{A.1} with $c=1$, we get :
\begin{align*}
    \E[\norm{\Bar{\vd}^t_k - \nabla f(\vy^t_{k-1})}^2] &\leq 2 \E[\norm{h(\vy^t_{k-1}) - \nabla f(\vy^t_{k-1}) - \nabla f(\vy^t_{k-1}) + \nabla f(\vx^{t-1})}^2] + 2 \E[\norm{\ve^t}^2]\\
    &\leq 2 \delta^2 \norm{\vy^t_{k-1} - \vx^{t-1}}^2 + 2 \E[\norm{\ve^t}^2]\\
    &= 2\delta^2 \Delta^t_{k-1} + 2 E^t
\end{align*}
\end{proof}

\textbf{Distance moved in each step.}

\begin{lemma}\label{lemmaDeltaTK} for $\eta\leq \frac{1}{5\delta K}$ we have:
$$\Delta^t_k \leq (1+\frac{1}{K})\Delta^t_{k-1} + 12 K\eta^2 E^t + 6K\eta^2\E[\norm{\nabla f(\vy^t_{k-1})}^2] + \eta^2 \sigma_h^2\,.$$
\end{lemma}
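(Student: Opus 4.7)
The plan is to expand $\vy^t_k-\vx^{t-1}$ using the recursion $\vy^t_k=\vy^t_{k-1}-\eta\vd^t_k$, split $\vd^t_k$ into its conditional mean $\Bar{\vd}^t_k$ and a zero-mean noise of variance at most $\sigma_h^2$, then apply Young's inequality with a carefully chosen parameter so that the $\Delta^t_{k-1}$ coefficient contracts to $1+\tfrac{1}{K}$ after absorbing the $\delta^2$ contribution from Lemma~\ref{lemmaVD}.

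Concretely, I would first condition on the $\sigma$-algebra generated by everything up to and including $\vy^t_{k-1}$ (which also determines $\vm^t$ and hence $\Bar{\vd}^t_k$). Bias–variance decomposition of $\vd^t_k$ gives
\[
\Delta^t_k \;\leq\; \E\bigl[\|\vy^t_{k-1}-\vx^{t-1}-\eta\Bar{\vd}^t_k\|^2\bigr] + \eta^2\sigma_h^2,
\]
which already produces the last term on the right-hand side.

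Next I would apply Lemma~\ref{A.1} with $c=\tfrac{1}{2K}$ to the quadratic term above, yielding
\[
\E\bigl[\|\vy^t_{k-1}-\vx^{t-1}-\eta\Bar{\vd}^t_k\|^2\bigr] \;\leq\; \bigl(1+\tfrac{1}{2K}\bigr)\Delta^t_{k-1} + (1+2K)\eta^2\,\E[\|\Bar{\vd}^t_k\|^2].
\]
Then I split $\Bar{\vd}^t_k = \bigl(\Bar{\vd}^t_k-\nabla f(\vy^t_{k-1})\bigr) + \nabla f(\vy^t_{k-1})$ via Lemma~\ref{A.1} with $c=1$, and invoke Lemma~\ref{lemmaVD} to get
\[
\E[\|\Bar{\vd}^t_k\|^2] \;\leq\; 4\delta^2\Delta^t_{k-1} + 4E^t + 2\,\E[\|\nabla f(\vy^t_{k-1})\|^2].
\]

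Assembling these bounds, the coefficient of $\Delta^t_{k-1}$ becomes $1+\tfrac{1}{2K}+4(1+2K)\eta^2\delta^2$. Under the hypothesis $\eta\leq\tfrac{1}{5\delta K}$, the bound $(1+2K)\leq 3K$ (for $K\ge 1$) gives $4(1+2K)\eta^2\delta^2\leq \tfrac{12}{25K}\leq \tfrac{1}{2K}$, so the coefficient is at most $1+\tfrac{1}{K}$. Similarly $(1+2K)\eta^2\cdot 4\leq 12K\eta^2$ and $(1+2K)\eta^2\cdot 2\leq 6K\eta^2$, recovering exactly the coefficients in the statement. The main thing to be careful about is tracking the filtration correctly when replacing $\vd^t_k$ by $\Bar{\vd}^t_k$, and choosing the Young parameter so that the $\delta^2$ contribution is absorbed at the rate $\tfrac{1}{2K}$ rather than merely $O(\tfrac{1}{K})$ with a constant bigger than one; this is what forces the choice $c=\tfrac{1}{2K}$ and the step-size restriction $\eta\leq\tfrac{1}{5\delta K}$.
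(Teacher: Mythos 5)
Your proposal is correct and follows essentially the same route as the paper's proof: the bias--variance split of $\vd^t_k$ into $\Bar{\vd}^t_k$ plus noise of variance $\sigma_h^2$, Young's inequality with parameter $\tfrac{1}{2K}$, the decomposition of $\Bar{\vd}^t_k$ around $\nabla f(\vy^t_{k-1})$ combined with Lemma~\ref{lemmaVD}, and the absorption of the $12K\delta^2\eta^2\leq\tfrac{12}{25K}\leq\tfrac{1}{2K}$ term under $\eta\leq\tfrac{1}{5\delta K}$. The only cosmetic difference is that you bound $\E[\|\Bar{\vd}^t_k\|^2]$ before multiplying by $(1+2K)\eta^2$ whereas the paper does it in the opposite order; the constants come out identically.
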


\begin{proof}
\begin{align*}
    \Delta^t_k &= \E[\norm{\vy^t_k - \vx^{t-1}}^2] \\
    &=\E[\norm{\vy^t_{k-1} - \eta \vd^t_k - \vx^{t-1}}^2]\\
    &=\E[\norm{\vy^t_{k-1} - \eta \Bar{\vd}^t_k - \vx^{t-1}}^2] + \eta^2 \E[\norm{\vd^t_k - \Bar{\vd}^t_k}^2]
    \\&\leq (1+\frac{1}{2K}) \Delta^t_{k-1} + (2K + 1)\eta^2 \E[\norm{\Bar{\vd}^t_k}^2] + \eta^2 \sigma_h^2
    \\&= (1+\frac{1}{2K}) \Delta^t_{k-1} + 3K\eta^2 \E[\norm{\Bar{\vd}^t_k \pm \nabla f(\vy^t_{k-1})}^2] + \eta^2 \sigma_h^2
    \\&\leq (1+\frac{1}{2K}) \Delta^t_{k-1} + 6K\eta^2 \E[\norm{\Bar{\vd}^t_k - \nabla f(\vy^t_{k-1})}^2] + 6K\eta^2 \E[\norm{\nabla f(\vy^t_{k-1})}^2] + \eta^2 \sigma_h^2
    \\&\leq (1+\frac{1}{2K}) \Delta^t_{k-1} + 6K\eta^2 (2\delta^2 \Delta^t_{k-1} + 2 E^t) + 6K\eta^2 \E[\norm{\nabla f(\vy^t_{k-1})}^2] + \eta^2 \sigma_h^2
    \\&= (1+\frac{1}{2K} + 12\delta^2 K \eta^2) \Delta^t_{k-1} + 12K\eta^2  E^t + 6K\eta^2 \E[\norm{\nabla f(\vy^t_{k-1})}^2] + \eta^2 \sigma_h^2
\end{align*}
The condition $\eta\leq \frac{1}{5\delta K}$ ensures $12\delta^2 K \eta^2 \leq \frac{1}{2K}$ which finishes the proof.

\end{proof}

\textbf{Progress in one step.}
\begin{lemma}\label{POneStep}
For $\eta \leq \min(\frac{1}{L}, \frac{1}{192\delta K})$, under assumptions A\ref{A1} and A\ref{A3}, the following inequality is true:
\begin{align*}
   \E\big[f(\vy^t_k) + \delta (1 + \frac{2}{K})^{K-k}\Delta^t_k\big] &\leq \E\big[f(\vy^t_{k-1}) + \delta (1 + \frac{2}{K})^{K-(k-1)}\Delta^t_{k-1}\big] \\&- \frac{\eta}{4}\E[\norm{\nabla f(\vy^t_{k-1})}^2] + 2 \eta E^t + (\frac{L}{2}+8\delta)\eta^2\sigma_h^2\, . 
\end{align*}
\end{lemma}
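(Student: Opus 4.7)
The plan is to establish a one-step descent on the Lyapunov potential $\Phi_k := f(\vy^t_k) + c_k \Delta^t_k$ with the geometrically decaying weights $c_k = \delta(1+2/K)^{K-k}$. The crucial identity is $c_{k-1} = (1+2/K)c_k$, so that when I combine the smoothness bound on $f$ with the distance recursion of Lemma \ref{lemmaDeltaTK}, the coefficient of $\Delta^t_{k-1}$ collects $c_k(1+1/K) - c_{k-1} = -c_k/K$, a negative excess that is what allows me to absorb the positive helper-bias term $\eta\delta^2\Delta^t_{k-1}$ produced by Lemma \ref{lemmaVD}.

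First I would apply $L$-smoothness to $\vy^t_k = \vy^t_{k-1} - \eta\vd^t_k$, take conditional expectation using $\E[\vd^t_k\mid\vy^t_{k-1}] = \bar{\vd}^t_k$ and $\E[\|\vd^t_k - \bar{\vd}^t_k\|^2]\leq \sigma_h^2$, and then invoke the polarisation identity $-2\nabla f(\vy^t_{k-1})^\top\bar{\vd}^t_k = \|\bar{\vd}^t_k - \nabla f(\vy^t_{k-1})\|^2 - \|\nabla f(\vy^t_{k-1})\|^2 - \|\bar{\vd}^t_k\|^2$. Under $\eta\leq 1/L$ the terms $-\tfrac{\eta}{2}\|\bar{\vd}^t_k\|^2$ and $\tfrac{L\eta^2}{2}\|\bar{\vd}^t_k\|^2$ cancel, leaving
\[
\E[f(\vy^t_k)] \leq \E[f(\vy^t_{k-1})] - \tfrac{\eta}{2}\E\|\nabla f(\vy^t_{k-1})\|^2 + \tfrac{\eta}{2}\E\|\bar{\vd}^t_k - \nabla f(\vy^t_{k-1})\|^2 + \tfrac{L\eta^2}{2}\sigma_h^2.
\]
Bounding the middle term by Lemma \ref{lemmaVD} produces contributions $\eta\delta^2\Delta^t_{k-1} + \eta E^t$.

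Next I would multiply Lemma \ref{lemmaDeltaTK} by $c_k$ and use the telescoping identity above to get
\[
c_k\Delta^t_k - c_{k-1}\Delta^t_{k-1} \leq -\tfrac{c_k}{K}\Delta^t_{k-1} + 12Kc_k\eta^2 E^t + 6Kc_k\eta^2\E\|\nabla f(\vy^t_{k-1})\|^2 + c_k\eta^2\sigma_h^2.
\]
Adding this to the smoothness bound, the $\Delta^t_{k-1}$ coefficient becomes $\eta\delta^2 - c_k/K$; since $c_k\geq\delta$ this is non-positive as soon as $\eta\leq 1/(\delta K)$, so the term is discarded. For the remaining coefficients I use the crude uniform bound $c_k \leq \delta(1+2/K)^K \leq e^2\delta \leq 8\delta$, valid for every $K\geq 1$. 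With $\eta\leq 1/(192\delta K)$ the gradient coefficient $-\eta/2 + 6Kc_k\eta^2$ is bounded by $-\eta/4$; the $E^t$ coefficient $\eta + 12Kc_k\eta^2$ stays below $2\eta$; and the noise coefficient $(L/2 + c_k)\eta^2$ is at most $(L/2 + 8\delta)\eta^2$.

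The one real obstacle is picking the right Lyapunov weights so that the $\delta^2\Delta^t_{k-1}$ generated by the helper-bias variance is exactly absorbed by a telescoping deficit while $c_k$ itself remains $\Theta(\delta)$ throughout the inner loop; the geometric choice $c_k = \delta(1+2/K)^{K-k}$ is essentially forced by wanting $c_{k-1}-c_k(1+1/K) = c_k/K$ to dominate $\eta\delta^2$ under the mild step-size assumption. Beyond that, everything is bookkeeping once the step-size constraints $\eta\leq \min(1/L, 1/(192\delta K))$ are in place.
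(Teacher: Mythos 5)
Your proof is correct and follows essentially the same route as the paper's: the same smoothness-plus-polarisation descent step, the same invocation of Lemmas~\ref{lemmaVD} and~\ref{lemmaDeltaTK}, and the same geometric weights $\delta(1+2/K)^{K-k}$ whose telescoping deficit $-c_k/K$ absorbs the $\eta\delta^2\Delta^t_{k-1}$ term, with identical step-size bookkeeping. Nothing to add.
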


\begin{proof}
The $L$-smoothness of $f$ guarantees :
$$f(\vy^t_k) - f(\vy^t_{k-1}) \leq - \eta\nabla f(\vy^t_{k-1})^\top \vd^t_k + \frac{L\eta^2}{2}\norm{\vd^t_k}^2\, .$$
By taking expectation conditional to the knowledge of $\vy^t_{k-1}$ we have:
$$\E[f(\vy^t_k) - f(\vy^t_{k-1})] \leq - \eta\E[\nabla f(\vy^t_{k-1})^\top \Bar{\vd}^t_k] + \frac{L\eta^2}{2}\E[\norm{\Bar{\vd}^t_k}^2] + \frac{L\eta^2}{2}\sigma_h^2\, .$$
Using the identity $-2ab = (a - b)^2 - a^2 - b^2$, we have : $$E[f(\vy^t_k) - f(\vy^t_{k-1})] \leq - \frac{\eta}{2} E[\norm{\nabla f(\vy^t_{k-1})}^2] + \frac{\eta}{2} E[\norm{\vd^t_k - \nabla f(\vy^t_{k-1})}^2] + \frac{L\eta^2 - \eta}{2}E[\norm{\vd^t_k}^2] + \frac{L\eta^2}{2}\sigma_h^2\, . $$
Using $\eta\leq \frac{1}{L}$ we can get rid of the last term in the above inequality.

Using Lemma\ref{lemmaVD}, we get : 
\begin{align*}
    \E[f(\vy^t_k) - f(\vy^t_{k-1})] &\leq - \frac{\eta}{2} \E\big[\norm{\nabla f(\vy^t_{k-1})}^2\big] + \frac{\eta}{2} \E\big[\norm{\vd^t_k - \nabla f(\vy^t_{k-1})}^2\big] + \frac{L\eta^2}{2}\sigma_h^2\\
    &\leq - \frac{\eta}{2} \E\big[\norm{\nabla f(\vy^t_{k-1})}^2\big] + \frac{\eta}{2}(2\delta^2 \Delta^t_{k-1} + 2 E^t) + \frac{L\eta^2}{2}\sigma_h^2\\
    &= \delta^2 \eta \Delta^t_{k-1} + \eta E^t - \frac{\eta}{2} \E\big[\norm{\nabla f(\vy^t_{k-1})}^2\big] + \frac{L\eta^2}{2}\sigma_h^2\, .
\end{align*}

Now we multiply Lemma\ref{lemmaDeltaTK} by $\delta (1 + \frac{2}{K})^{K-k}$. Note that $1 \leq (1 + \frac{2}{K})^{K-k}\leq 8$.
\begin{align*}
    \delta (1 + \frac{2}{K})^{K-k}\Delta^t_k &\leq \delta (1 + \frac{2}{K})^{K-k}\big((1+\frac{1}{K})\Delta^t_{k-1} + 12 K\eta^2 E^t + 6K\eta^2\E[\norm{\nabla f(\vy^t_{k-1})}^2]\big)\\&+ \delta (1 + \frac{2}{K})^{K-k}\eta^2\sigma_h^2\\
    &\leq \delta (1 + \frac{2}{K})^{K-(k-1)}\Delta^t_{k-1} - \frac{\delta}{K}(1 + \frac{2}{K})^{K-k}\Delta^t_{k-1} + 96 K\delta\eta^2E^t \\&+ 48 K\delta\eta^2\E[\norm{\nabla f(\vy^t_{k-1})}^2]+ 8\delta\eta^2\sigma_h^2\\
    &\leq \delta (1 + \frac{2}{K})^{K-(k-1)}\Delta^t_{k-1} - \frac{\delta}{K}\Delta^t_{k-1} + 96 K\delta\eta^2E^t\\& + 48 K\delta\eta^2\E[\norm{\nabla f(\vy^t_{k-1})}^2]+ 8\delta\eta^2\sigma_h^2
\end{align*}
Adding the last two inequalities, we get :
\begin{align*}
    \E[f(\vy^t_k)] + \delta (1 + \frac{2}{K})^{K-k}\Delta^t_k &\leq \E[f(\vy^t_{k-1})] + \delta (1 + \frac{2}{K})^{K-(k-1)}\Delta^t_{k-1} \\&+(\delta^2 \eta- \frac{\delta}{K})\Delta^t_{k-1} \\&+ (\eta+ 96 K\delta\eta^2)E^t \\&+ (- \frac{\eta}{2}+48 K\delta\eta^2)\E[\norm{\nabla f(\vy^t_{k-1})}^2]\\&+ (L/2 + 8\delta)\eta^2\sigma_h^2
\end{align*}
For $\eta\leq \frac{1}{192\delta K}$ we have $\delta^2 \eta- \frac{\delta}{K}\leq 0$, $\eta+ 96 K\delta\eta^2\leq 2\eta$ and $- \frac{\eta}{2}+48 K\delta\eta^2\leq - \frac{\eta}{4}$ which gives the lemma.
\end{proof}

\textbf{Distance moved in a cycle.}

\begin{lemma}\label{BoundDelta}
For $\eta \leq \frac{1}{5 K\delta}$ and under assumptions A\ref{A1} and A\ref{A3} with $G^t = \frac{1}{K}\sum_{k=0}^{K-1} \E\big[\norm{\nabla f(\vy^t_k)}^2\big]$, we have :
$$\Delta^t \Big(:= \E\big[\norm{\vx^t - \vx^{t-1}}^2\big]\Big) \leq 36 K^2 \eta^2 E^t + 18 K^2 \eta^2 G^t + 3K\eta^2\sigma_h^2\,.$$
\end{lemma}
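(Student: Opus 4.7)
The plan is to apply Lemma~\ref{lemmaDeltaTK} iteratively over one cycle. Since $\vy^t_0 = \vx^{t-1}$ we start from $\Delta^t_0 = 0$, and since $\vx^t = \vy^t_K$ by the algorithm's update rule, we have $\Delta^t = \Delta^t_K$. So the whole task reduces to unrolling a one-step linear recurrence of the form
\[
\Delta^t_k \leq (1+\tfrac{1}{K})\,\Delta^t_{k-1} + R_k,
\qquad
R_k := 12K\eta^2 E^t + 6K\eta^2\,\E[\|\nabla f(\vy^t_{k-1})\|^2] + \eta^2 \sigma_h^2,
\]
whose validity requires precisely the hypothesis $\eta \leq \tfrac{1}{5K\delta}$ that Lemma~\ref{lemmaDeltaTK} needs; no extra step-size condition will be introduced.

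Unrolling gives the closed form $\Delta^t_K \leq \sum_{k=1}^K (1+\tfrac{1}{K})^{K-k} R_k$. Next I would bound the geometric factor uniformly using the standard estimate $(1+\tfrac{1}{K})^{K-k} \leq (1+\tfrac{1}{K})^K \leq e \leq 3$, which yields $\Delta^t_K \leq 3\sum_{k=1}^K R_k$. Plugging in the definition of $R_k$ and reindexing the sum over gradient norms from $k=1,\dots,K$ to $k=0,\dots,K-1$, the three pieces become
\[
3\cdot K \cdot 12K\eta^2 E^t \;+\; 3\cdot 6K\eta^2 \sum_{k=0}^{K-1}\E[\|\nabla f(\vy^t_{k})\|^2] \;+\; 3K\eta^2 \sigma_h^2,
\]
and recognizing $\sum_{k=0}^{K-1}\E[\|\nabla f(\vy^t_{k})\|^2] = K\,G^t$ from the definition of $G^t$ delivers the three announced coefficients $36K^2\eta^2$, $18K^2\eta^2$ and $3K\eta^2$.

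There is essentially no obstacle here: the lemma is a direct summation of the per-step bound already proved as Lemma~\ref{lemmaDeltaTK}, and the only quantitative choice is how loosely to bound the geometric factor $(1+1/K)^{K-k}$. Using the crude bound by $e\leq 3$ is what produces the clean constants $36$, $18$, $3$ in the statement; a tighter telescoping would only affect these constants and not the scaling in $K$, $\eta$, $E^t$, $G^t$ or $\sigma_h^2$. The role of this lemma in the larger proof is to feed into the momentum-error recursion for $E^t$ (which will involve $\Delta^t$), so keeping the dependence on $E^t$ and $G^t$ explicit is exactly what is needed downstream.
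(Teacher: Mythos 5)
Your proposal is correct and follows exactly the same route as the paper's proof: unroll the recurrence from Lemma~\ref{lemmaDeltaTK} starting from $\Delta^t_0=0$, bound the geometric factor $(1+\tfrac{1}{K})^{K-k}\leq 3$, and sum to obtain the constants $36$, $18$, $3$. No gaps.
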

\begin{proof}
We use the fact $\vx^t = \vy^t_{K}$, which means $\Delta^t = \Delta^t_{K}$. The recurrence established in Lemma\ref{lemmaDeltaTK} implies :
\begin{align*}
    \Delta^t &\leq  \sum_{k=1}^{K} (1+\frac{1}{K})^{K-k}\Big(12 K\eta^2 E^t + 6K\eta^2\E[\norm{\nabla f(\vy^t_{k-1})}^2]+\eta^2\sigma_h^2\Big)\\
    &\leq 36 K^2\eta^2 E^t + 18 K^2\eta^2 \frac{1}{K}\sum_k \E\big[\norm{\nabla f(\vy^t_k)}^2\big] + 3K\eta^2\sigma_h^2\\&=36 K^2\eta^2 E^t + 18 K^2\eta^2 G^t + 3K\eta^2\sigma_h^2\, .
\end{align*}
Where we used the fact that $(1+\frac{1}{K})^{K-k}\leq 3$.
\end{proof}

\textbf{Momentum variance.} Here we will bound the quantity $E^t$.

\begin{lemma}\label{MomentumVar}
Under assumptions A\ref{A1},A\ref{A2} and for $a\geq 12 K \delta\eta$ , we have : $$E^t \leq (1 - \frac{a}{2})E^{t-1} + \frac{36\delta^2K^2\eta^2}{a}G^{t-1} + a^2 \sigma_{f-h}^2 + \frac{a}{24K}\sigma_h^2\, .$$
\end{lemma}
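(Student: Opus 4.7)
The plan is to expand the recursion for $\vm^t$ by adding and subtracting $\nabla(f-h)(\vx^{t-2})$ (keeping the notation $\nabla(f-h) := \nabla f - \nabla h$). Writing
\[
\ve^t = (1-a)\ve^{t-1} \;+\; (1-a)\bigl[\nabla(f-h)(\vx^{t-2}) - \nabla(f-h)(\vx^{t-1})\bigr] \;+\; a\bigl[\vg_{f-h}(\vx^{t-1},\xi^{t-1}_{f-h}) - \nabla(f-h)(\vx^{t-1})\bigr],
\]
the last bracket is a zero-mean noise term, conditionally independent of the rest, with expected squared norm at most $a^2\sigma_{f-h}^2$ by Assumption~\ref{A2}. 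Taking squared norm, using the bias-variance split, and applying Lemma~\ref{A.1} with parameter $c=a$ to the remaining deterministic sum yields
\[
E^t \;\le\; (1+a)(1-a)^2 E^{t-1} \;+\; \tfrac{1+a}{a}(1-a)^2\,\E\bigl[\|\nabla(f-h)(\vx^{t-1})-\nabla(f-h)(\vx^{t-2})\|^2\bigr] \;+\; a^2\sigma_{f-h}^2.
\]

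The Hessian-similarity Assumption~\ref{A3} gives $\|\nabla(f-h)(\vx^{t-1})-\nabla(f-h)(\vx^{t-2})\|\le \delta\|\vx^{t-1}-\vx^{t-2}\|$, so the middle expectation is at most $\delta^2\Delta^{t-1}$. Plugging the distance bound of Lemma~\ref{BoundDelta} (which applies since the hypothesis $a\ge 12K\delta\eta$ combined with $a\le 1$ enforces $\eta\le \tfrac{1}{12K\delta}\le \tfrac{1}{5K\delta}$) in for $\Delta^{t-1}$ gives
\[
E^t \;\le\; \Bigl[(1+a)(1-a)^2 + \tfrac{72(1+a)(1-a)^2 \delta^2 K^2\eta^2}{a}\Bigr] E^{t-1} + \tfrac{36(1+a)(1-a)^2 \delta^2 K^2\eta^2}{a}\,G^{t-1} + \tfrac{6(1+a)(1-a)^2\delta^2 K\eta^2}{a}\sigma_h^2 + a^2\sigma_{f-h}^2.
\]

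Finally, using $(1+a)(1-a)^2 = (1-a)(1-a^2)\le (1-a)$, the coefficient of $E^{t-1}$ is at most $(1-a) + \tfrac{72\delta^2 K^2\eta^2}{a}$, and the condition $a \ge 12 K\delta\eta$ ensures $\tfrac{72\delta^2K^2\eta^2}{a}\le \tfrac{a}{2}$, giving the desired contraction factor $(1-\tfrac{a}{2})$. The same inequality reduces $\tfrac{6(1-a)^2\delta^2 K\eta^2}{a}\le \tfrac{a}{24K}$, matching the $\sigma_h^2$ coefficient, while $(1-a)^2\le 1$ absorbs the $G^{t-1}$ prefactor into $\tfrac{36\delta^2K^2\eta^2}{a}$. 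I expect the only real subtlety to be the bookkeeping around the constants in the Young parameter $c$ and in reducing $(1+a)(1-a)^2$ cleanly; the structural ingredients (momentum recursion, independence of fresh noise, Hessian similarity, and the distance lemma) are all already in place.
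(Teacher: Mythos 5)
Your proof is correct and follows essentially the same route as the paper's: the identical decomposition of $\ve^t$ into the contracted previous error, a drift term controlled by the Hessian-similarity Lipschitz bound $\delta\|\vx^{t-1}-\vx^{t-2}\|$, and fresh zero-mean noise, followed by Young's inequality and Lemma~\ref{BoundDelta}, then the reduction via $a\geq 12K\delta\eta$. The only differences are cosmetic constant bookkeeping (you take $c=a$ where the paper takes $c=a/2$, and your intermediate coefficients carry a harmless extra factor of $2$ that still yields valid upper bounds), and you are in fact slightly more careful than the paper in verifying that $\eta\leq\frac{1}{5K\delta}$ holds so that Lemma~\ref{BoundDelta} applies.
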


\begin{proof}
\begin{align*}
    E^t &= \E\big[\norm{\vm^t - \nabla f(\vx^{t-1})+\nabla h(\vx^{t-1})}^2\big]\\
    &= \E\big[\norm{(1-a)(\vm^{t-1}- \nabla f(\vx^{t-1})+\nabla h(\vx^{t-1})) + a (\vg_f(\vx^{t-1})+\vg_h(\vx^{t-1}) - \nabla f(\vx^{t-1})+\nabla h(\vx^{t-1}))}^2\big]\\
    &\leq (1-a)^2\E\big[\norm{\vm^{t-1} \pm(\nabla f(\vx^{t-2})-\nabla h(\vx^{t-2})) - (\nabla f(\vx^{t-1})-\nabla f(\vx^{t-1}))}^2\big]+ a^2 \sigma_{f-h}^2\\
    &\leq (1-a)^2 (1+a/2)E^{t-1} + (1-a)^2 (1+2/a)\E\big[\norm{\nabla f(\vx^{t-2}) - \nabla h(\vx^{t-2})- \nabla f(\vx^{t-1}) + \nabla h(\vx^{t-1})}^2\big] \\&+ a^2 \sigma_{f-h}^2\\
    &\leq (1 - a)E^{t-1} + \frac{2\delta^2}{a}\Delta^{t-1} + a^2 \sigma_{f-h}^2
\end{align*}
Where we used the L-smoothness of $f$ to get the last inequality. We have also used the inequalities $(1-a)^2 (1+a/2)\leq (1 - a)$ and $(1-a)^2 (1+2/a)\leq 2/a$ true for all $a\in(0,1]$.

 We can now use Lemma\ref{BoundDelta} to have : 
$$E^t \leq (1 - a + \frac{72\delta^2 K^2\eta^2}{a})E^{t-1} + \frac{36\delta^2K^2\eta^2}{a}G^{t-1} + a^2 \sigma_{f-h}^2 + \frac{6K\delta^2\eta^2}{a}\sigma_h^2$$

By taking $a\geq 12 \delta K\eta$ we ensure $\frac{72\delta^2K^2\eta^2}{a}\leq \frac{a}{2}$, So :
$$E^t \leq (1 - \frac{a}{2})E^{t-1} + \frac{36\delta^2K^2\eta^2}{a}G^{t-1} + a^2 \sigma_{f-h}^2 + \frac{a}{24K}\sigma_h^2$$
\end{proof}
\textbf{Progress in one round.}
\begin{lemma} \label{descentMomentum} Under the same assumptions as in Lemma\ref{POneStep},
we have : $$\frac{\eta}{4} G^t \leq \frac{F^{t-1} - F^t}{K} - \frac{\delta}{K} \Delta^t + 2 \eta E^t + (\frac{L}{2}+ 8\delta)\eta^2\sigma_h^2\, .$$
Where $F^t = \E[f(\vx^t)] - f^\star$.
\end{lemma}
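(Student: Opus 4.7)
The plan is to telescope the per-step descent inequality in Lemma~\ref{POneStep} across the $K$ inner iterations of a single cycle. Define the Lyapunov quantity
\[
\Phi^t_k := \E[f(\vy^t_k)] + \delta \Big(1 + \tfrac{2}{K}\Big)^{K-k} \Delta^t_k .
\]
Lemma~\ref{POneStep} states precisely that
\[
\Phi^t_k - \Phi^t_{k-1} \leq -\tfrac{\eta}{4}\E[\|\nabla f(\vy^t_{k-1})\|^2] + 2\eta E^t + (\tfrac{L}{2}+8\delta)\eta^2 \sigma_h^2 ,
\]
for every $k = 1,\ldots,K$, under the step-size condition $\eta \leq \min(1/L, 1/(192\delta K))$ (which is assumed here since we invoke the same hypotheses).

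Summing from $k=1$ to $k=K$ gives a telescoping sum on the left and $K\eta/4$ times $G^t$ on the right:
\[
\Phi^t_K - \Phi^t_0 \leq -\tfrac{K\eta}{4}\, G^t + 2K\eta\, E^t + K(\tfrac{L}{2}+8\delta)\eta^2\sigma_h^2 .
\]
The key simplifications are the boundary evaluations. At $k=0$, since $\vy^t_0 = \vx^{t-1}$, we have $\Delta^t_0 = 0$, so $\Phi^t_0 = \E[f(\vx^{t-1})]$. At $k=K$, since $\vx^t = \vy^t_K$ and $(1+2/K)^{0} = 1$, we get $\Phi^t_K = \E[f(\vx^t)] + \delta\, \Delta^t$. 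Plugging these in and subtracting $f^\star$ from both $\E[f(\vx^t)]$ and $\E[f(\vx^{t-1})]$ yields
\[
F^t - F^{t-1} + \delta\, \Delta^t \leq -\tfrac{K\eta}{4}\, G^t + 2K\eta\, E^t + K(\tfrac{L}{2}+8\delta)\eta^2\sigma_h^2 .
\]
Rearranging and dividing through by $K$ produces the stated inequality.

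There is essentially no obstacle here beyond keeping the boundary terms of the telescoping sum straight: the inner progress lemma already did the heavy lifting of absorbing the $\delta^2 \Delta^t_{k-1}$ cross term into the Lyapunov function via the geometric factor $(1+2/K)^{K-k}$, and the step-size conditions needed are inherited directly from Lemma~\ref{POneStep}. The only subtlety worth double-checking is that the Lyapunov function at $k=K$ collapses exactly to $\E[f(\vx^t)] + \delta\,\Delta^t$ (no extra multiplicative factor), which is what provides the useful $-\tfrac{\delta}{K}\Delta^t$ term on the right-hand side after division by $K$.
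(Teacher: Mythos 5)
Your proof is correct and is essentially identical to the paper's own argument: both telescope the Lyapunov inequality of Lemma~\ref{POneStep} over $k=1,\dots,K$, use $\Delta^t_0=0$ and $\Delta^t_K=\Delta^t$ at the boundaries, and divide by $K$. The boundary evaluations and the sign of the $-\tfrac{\delta}{K}\Delta^t$ term are handled exactly as in the paper.
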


\begin{proof} We use the inequality established in Lemma\ref{POneStep}, which can be rearranged in the following way :
\begin{align*}
    \frac{\eta}{4}\E[\norm{\nabla f(\vy^t_{k-1})}^2] &\leq \E\big[f(\vy^t_{k-1}) + \delta (1 + \frac{2}{K})^{K-(k-1)}\Delta^t_{k-1}\big] - \Big(\E\big[f(\vy^t_k) + \delta (1 + \frac{2}{K})^{K-k}\Delta^t_k\big]\Big) \\&+ 2 \eta E^t+(\frac{L}{2}+ 8\delta)\eta^2\sigma_h^2\, .
\end{align*}
We sum this inequality from $k=1$ to $k=K$, this will give:
$$\frac{K\eta}{4}G^t \leq \E\big[f(\vy^t_{0}) + \delta (1 + \frac{2}{K})^{K}\Delta^t_{0}\big] - \Big(\E\big[f(\vy^t_K) + \delta \Delta^t_K\big]\Big) + 2 \eta K E^t+(\frac{L}{2}+ 8\delta)K\eta^2\sigma_h^2\, .$$
We note that $\vy^t_{0} = \vx^{t-1}$ and $\vy^t_K = \vx^t$, which means $\Delta^t_{0} = 0$ and $\Delta^t_K = \Delta^t $.
So we have :

$$\frac{\eta}{4}G^t \leq  \frac{F^{t-1} - F^t}{K} - \frac{\delta}{K} \Delta^t + 2 \eta E^t + (\frac{L}{2}+ 8\delta)\eta^2\sigma_h^2\, .$$
\end{proof}

Let's derive now the convergence rate.

We have : $$\left\{
    \begin{array}{ll}
        \frac{\eta}{4}G^t \leq  \frac{F^{t-1} - F^t}{K} + 2 \eta E^t + (\frac{L}{2}+ 8\delta)\eta^2\sigma_h^2\, , \\
        E^t \leq (1 - \frac{a}{2})E^{t-1} + \frac{36\delta^2K^2\eta^2}{a}G^{t-1} + a^2 \sigma_{f-h}^2+ \frac{a}{24K}\sigma_h^2\, .
    \end{array}
\right.$$
 We will add to both sides of the first inequality the quantity $\frac{4\eta}{a}E^t$.
 
 So : $$\frac{\eta}{4}G^t + \frac{4\eta}{a}E^t\leq  \frac{F^{t-1} - F^t}{K} + 2 \eta E^t  + (\frac{L}{2}+ 8\delta)\eta^2\sigma_h^2 + \frac{4\eta}{a}\Big((1 - \frac{a}{2})E^{t-1} + \frac{36\delta^2K^2\eta^2}{a}G^{t-1} + a^2 \sigma_{f-h}^2+ \frac{a}{24K}\sigma_h^2\Big)\, ,$$
 Which gives for $a\geq 36 \delta K\eta$ :
 \begin{equation}\label{Mpotential}
     \frac{\eta}{4}G^t - \frac{\eta}{8}G^{t-1}\leq \Phi^{t-1} - \Phi^t + 4 \eta a \sigma_{f-h}^2 + (\frac{L}{2}+ 8\delta)\eta^2\sigma_h^2 + \frac{\eta}{6K}\sigma_h^2\, ,
 \end{equation}
 
For a potential $\Phi^t = \frac{F^t}{K} + (\frac{4\eta}{a} - 2\eta)E^t \leq \frac{F^t}{K} + \frac{4\eta}{a}E^t$.

Summing the inequality \ref{Mpotential} over $t$, gives :
\begin{align*}
    \frac{1}{8T}\sum_{t=1}^T G^t &\leq \frac{\Phi^{0}}{\eta T} + 4 a \sigma_{f - h}^2 + (\frac{L}{2}+ 8\delta)\eta\sigma_h^2 + \frac{\sigma_h^2}{6K}\, ,\\
    &\leq \frac{F^{0}}{\eta KT} + \frac{4}{a T}E^0 + 4 a \sigma_{f-h}^2 + (\frac{L}{2}+ 8\delta)\eta\sigma_h^2 + \frac{\sigma_h^2}{6K}\, ,
    \intertext{Now taking $a = 36 \delta K\eta\leq 1$, we get :}
    \frac{1}{8T}\sum_{t=1}^T G^t &\leq\frac{F^{0}}{\eta KT} + \frac{1}{8 \delta K \eta T}E^0 + 144 \delta K\eta \sigma_{f-h}^2+ (\frac{L}{2}+ 8\delta)\eta\sigma_h^2 + \frac{\sigma_h^2}{6K}\, ,\\
    &=\frac{\Tilde{F}}{\eta KT} + 144 \delta K\eta \sigma_{f-h}^2+ (\frac{L}{2}+ 8\delta)\eta\sigma_h^2 + \frac{\sigma_h^2}{6K}\, ,\\
    &=\frac{\Tilde{F}}{\eta KT} + 144 L\beta K \eta  \sigma_f^2 + \frac{\sigma_h^2}{6K}\, ,
    \intertext{For $\Tilde{F} = F^{0} + \frac{E^0}{8 \delta}$ and $\beta = \frac{\delta}{L}\bigg(\frac{\sigma_{f-h}^2}{\sigma_f^2} + \frac{1}{18K}\frac{\sigma_h^2}{\sigma_f^2}\bigg) + \frac{1}{288K}\frac{\sigma_h^2}{\sigma_f^2}$.}
\end{align*}
Taking into account all the conditions on $\eta$ that were necessary, we can take :
$$\eta = \min(\frac{1}{L},\frac{1}{192\delta K}, \sqrt{\frac{\Tilde{F}}{144L\beta K^2T\sigma_f^2}})\, .$$
This choice gives us the rate :
$$\frac{1}{8T}\sum_{t=1}^T G^t \leq 24 \sqrt{\frac{L\beta\Tilde{F}\sigma_f^2}{T}} + \frac{(L + 192 \delta K)\Tilde{F}}{KT} + \frac{\sigma_h^2}{6K}\, .$$

\textbf{Dealing with the term $E^0$.} If we use a batch-size $S$ times bigger for generating $\vm^0$ than the other batch-sizes, then we have $E^0\leq \frac{\sigma_{f-h}^2}{S}$. In particular, for $S=T$, $E^0\leq \frac{\sigma_{f-h}^2}{T}$.

Now by taking $a = \max(\frac{1}{T},36\delta K\eta)$, we ensure $\frac{E^0}{aT}\leq E^0\leq \frac{\sigma_{f-h}^2}{T}$, then

\begin{align*}
    \frac{1}{8T}\sum_{t=1}^T G^t &\leq \frac{F^{0}}{\eta KT} + \frac{4}{a T}E^0 + 4 a \sigma_{f-h}^2 + (\frac{L}{2}+ 8\delta)\eta\sigma_h^2 + \frac{\sigma_h^2}{6K}\, ,\\
    &\leq \frac{F^{0}}{\eta KT} + 4 E^0 + \frac{4\sigma_{f-h}^2}{T} + 144K\delta\eta \sigma_{f-h}^2 + (\frac{L}{2}+ 8\delta)\eta\sigma_h^2 + \frac{\sigma_h^2}{6K}\, ,\\
    &\leq \frac{F^{0}}{\eta KT} + \frac{8\sigma_{f-h}^2}{T} + 144K\delta\eta \sigma_{f-h}^2 + (\frac{L}{2}+ 8\delta)\eta\sigma_h^2 + \frac{\sigma_h^2}{6K}\, ,\\ &= \frac{F^{0}}{\eta KT}  + 144 L\beta K \eta  \sigma_f^2 + \frac{8\sigma_{f-h}^2}{T} + \frac{\sigma_h^2}{6K}\, ,
\end{align*}

Taking $$\eta = \min(\frac{1}{L},\frac{1}{192\delta K}, \sqrt{\frac{F^0}{144L\beta K^2T\sigma_f^2}})\, .$$
we get the following :
$$\frac{1}{8T}\sum_{t=1}^T G^t \leq 24 \sqrt{\frac{L\beta F^0\sigma_f^2}{T}} + \frac{(L + 192 \delta K)F^0}{KT} + \frac{8\sigma_{f-h}^2}{T} + \frac{\sigma_h^2}{6K}\, .$$

\subsection{MVR with auxiliary information}
\subsubsection{Algorithm description}\label{app MVR}
The algorithm that we proposed proceeds in cycles, at the beginning of each cycle we have states $\vx^{t-2}$ and $\vx^{t-1}$. To update these states, we take $K+1$ iterations of the form $\vy^t_k = \vy^t_{k-1} - \eta \vd^t_k$ where $\vy^t_0 = \vx^{t-1}$, $\vd^t_k =  \vg_h(\vy^t_{k-1},\xi^t_{h,k}) + \vm^{t}$ and $\vm^t =  (1-a)\vm^{t-1} + a \vg_{f-h}(\vx^{t-1},\xi^{t-1}_{f-h}) + (1 - a)\Big(\vg_{f-h}(\vx^{t-1},\xi^{t-1}_{f-h}) - \vg_{f-h}(\vx^{t-2},\xi^{t-1}_{f-h}))$. Then we set $\vx^t = \vy^t_K$.

\begin{algorithm}[h]
\begin{algorithmic}
\REQUIRE $\vx_0$,$\vm^0$, $\eta$, $a$, $T$, $K$
\STATE $\vx_{-1} = \vx_0$
\FOR{$t=1$ to $T$}
\STATE  Sample $\xi^{t-1}_{f-h}$
\STATE $\vg^{f-h}_{prev} = \vg_{f-h}(\vx^{t-2},\xi^{t-1}_{f-h})$
\STATE $\vg^{f-h} = \vg_{f-h}(\vx^{t-1},\xi^{t-1}_{f-h})$
\STATE $\vm^{t} = (1-a) \vm^{t-1} + a \vg^{f-h} + (1-a)(\vg^{f-h} - \vg^{f-h}_{prev})$ §update momentum
\STATE $\vy^t_0 = \vx^{t-1}$
\FOR{$k=0$ to $K-1$}
\STATE Sample $\xi^{t,k}_h$;\: Compute $\vg_h(\vy^{t}_k,\xi^{t,k}_h)$
\STATE $\vd^t_k = \vg_h(\vy^{t}_k,\xi^{t,k}_h) + \vm^{t} $
\STATE $\vy^t_{k+1} = \vy^t_{k} - \eta \vd^t_k$
\ENDFOR
\STATE Update $\vx^t  = \vy^t_{K}$
\ENDFOR
\end{algorithmic}
\caption{AUXMVR$(f,h)$} 
\label{alg:AUXMVR}
\end{algorithm}

We will use the same notations as section~\ref{AUXMOMC2}.

\subsubsection{Convergence of MVR with auxiliary information}
We prove the following theorem that gives the convergence rate of this algorithm in the non-convex case.
\begin{framedtheorem}
Under assumptions A\ref{A1}, \ref{A2},\ref{A3'}. For $\vm^0$ such that $E^0\leq \sigma_{f-h}^2/T$, for $a =\max(\frac{1}{T}, 1156 \delta^2 K^2\eta^2)$  and 
$\eta = \min\Big(\frac{1}{L},\frac{1}{192\delta K}, \frac{1}{K}\Big(\frac{F^0}{18432\delta^2T\sigma_{f-h}^2}\Big)^{1/3},\sqrt{\frac{F^0}{KT(L/2 + 8\delta K)}}\Big)\, .$
This choice gives us the rate :
$$\frac{1}{8KT}\sum_{t=1}^T\sum_{k=1}^{K-1} \E\big[\norm{\nabla f(\vy^t_{k})}^2\big] \leq 40 \Big(\frac{\delta F^0\sigma_{f-h}}{T}\Big)^{2/3} + 2\sqrt{\frac{(L/2 + 8\delta )F^0\sigma_{h}^2}{KT}} +  \frac{(L + 192 \delta K)F^0}{KT} + \frac{8\sigma_{f-h}^2}{T} + \frac{\sigma_h^2}{48 K}\, .$$
Where $F^0 = f(\vx^0) - f^\star$ and $E^0 = \E\|\vm^0 - (\nabla f(\vx^0) - \nabla h(\vx^0))\|^2$.
\end{framedtheorem}

\subsubsection{Proof}

Given that the form of $\vd^t_k$ is the same as in AuxMOM (Algorithm~\ref{alg:AUXMOM}), the same Lemmas still hold, except for Lemma~\ref{MomentumVar} which changes to the following Lemma.

\textbf{Momentum variance of AUXMVR.} Here we will bound the quantity $E^t$.

\begin{lemma}\label{VarMomAUXMVR}
Under assumptions A\ref{A3'},A\ref{A2} , we have : $$E^t \leq (1 - a)E^{t-1} + 2\delta^2\Delta^{t-1} + 2 a^2 \sigma_{f-h}^2\, .$$
Combining this inequality with Lemma~\ref{BoundDelta}, we get for $\eta\leq \frac{1}{5K\delta}$ and $a\geq 144 K^2\delta^2\eta^2$:
$$E^t \leq (1 - \frac{a}{2})E^{t-1} + 36 K^2\delta^2\eta^2 G^{t-1} + 2 a^2 \sigma_{f-h}^2 + 6K\delta^2\eta^2\sigma_h^2\, .$$

\end{lemma}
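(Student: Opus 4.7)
The plan is to first expand the momentum error $\ve^t := \vm^t - \nabla f(\vx^{t-1}) + \nabla h(\vx^{t-1})$ using the MVR update and then apply the standard STORM-style decomposition. Writing $\phi(\vx) := \nabla f(\vx) - \nabla h(\vx)$ and collecting terms in the MVR update gives $\vm^t = (1-a)\vm^{t-1} + \vg_{f-h}(\vx^{t-1},\xi^{t-1}_{f-h}) - (1-a)\vg_{f-h}(\vx^{t-2},\xi^{t-1}_{f-h})$. Adding and subtracting $(1-a)\phi(\vx^{t-2})$ and $\phi(\vx^{t-1})$ yields
\[
\ve^t \;=\; (1-a)\,\ve^{t-1} \;+\; (1-a)\,\vu^t \;+\; a\,\vw^t,
\]
where $\vu^t := [\vg_{f-h}(\vx^{t-1},\xi^{t-1}_{f-h}) - \vg_{f-h}(\vx^{t-2},\xi^{t-1}_{f-h})] - [\phi(\vx^{t-1}) - \phi(\vx^{t-2})]$ is the MVR correction (both realizations using the same sample) and $\vw^t := \vg_{f-h}(\vx^{t-1},\xi^{t-1}_{f-h}) - \phi(\vx^{t-1})$ is the raw snapshot noise.

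\textbf{First inequality.} I would then take expectations. Conditional on the filtration up to time $t-1$ (which fixes $\vx^{t-2}$, $\vx^{t-1}$, $\vm^{t-1}$, and hence $\ve^{t-1}$), the sample $\xi^{t-1}_{f-h}$ is fresh, so both $\vu^t$ and $\vw^t$ have zero conditional mean and the cross term with $\ve^{t-1}$ vanishes, giving $E^t = (1-a)^2 E^{t-1} + \E\|(1-a)\vu^t + a\vw^t\|^2$. Using the Young-type bound $\|p+q\|^2 \leq 2\|p\|^2 + 2\|q\|^2$ together with Assumption~\ref{A3'} (each realization $\vg_{f-h}(\cdot,\xi)$ is $\delta$-Lipschitz) and Assumption~\ref{A2}, one has $\E\|\vu^t\|^2 \leq \E\|\vg_{f-h}(\vx^{t-1},\xi^{t-1}_{f-h}) - \vg_{f-h}(\vx^{t-2},\xi^{t-1}_{f-h})\|^2 \leq \delta^2 \Delta^{t-1}$ (centering only decreases the second moment) and $\E\|\vw^t\|^2 \leq \sigma_{f-h}^2$. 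Finally, $(1-a)^2 \leq 1-a$ for $a\in[0,1]$ gives the first claim
\[
E^t \;\leq\; (1-a)\,E^{t-1} \;+\; 2\delta^2\,\Delta^{t-1} \;+\; 2a^2 \sigma_{f-h}^2.
\]

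\textbf{Second inequality.} For the refined form I would substitute the bound of Lemma~\ref{BoundDelta} for $\Delta^{t-1}$ (which is valid under $\eta \leq 1/(5K\delta)$), producing the extra term $72 K^2 \delta^2 \eta^2 E^{t-1}$. This is absorbed into the contraction factor exactly when $a \geq 144 K^2 \delta^2 \eta^2$, since then $1 - a + 72K^2\delta^2\eta^2 \leq 1 - a/2$. The remaining terms match the stated right-hand side $36 K^2 \delta^2 \eta^2 G^{t-1} + 6K\delta^2\eta^2 \sigma_h^2 + 2a^2 \sigma_{f-h}^2$.

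\textbf{Main obstacle.} The only subtle point is recognizing why Assumption~\ref{A3'} is the \emph{right} strengthening of Hessian similarity here: the MVR correction $\vu^t$ uses the \emph{same} sample $\xi^{t-1}_{f-h}$ at both $\vx^{t-1}$ and $\vx^{t-2}$, so one cannot bound its second moment via the Hessian-similarity bound on the mean function $\phi$ (Assumption~\ref{A3}); instead one needs Lipschitzness of each stochastic realization of $\vg_{f-h}$, which is precisely Assumption~\ref{A3'}. Once the decomposition is identified, the remaining steps (orthogonality of the correction to $\ve^{t-1}$, Young's inequality between $\vu^t$ and $\vw^t$, and plugging in Lemma~\ref{BoundDelta}) are routine algebra.
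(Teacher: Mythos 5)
Your proposal is correct and follows essentially the same route as the paper: the identical decomposition of $\ve^t$ into the contracted $(1-a)\ve^{t-1}$ plus a centered MVR correction (bounded by $\delta$-Lipschitzness of each realization $\vg_{f-h}(\cdot,\xi)$, i.e.\ Assumption~\ref{A3'}) and a centered snapshot-noise term, followed by substituting Lemma~\ref{BoundDelta} and absorbing the $72K^2\delta^2\eta^2 E^{t-1}$ term into the contraction via $a\geq 144K^2\delta^2\eta^2$. Your constant $72$ is in fact the correct value (the paper's intermediate ``$74$'' is a typo), and your remark on why Assumption~\ref{A3} would not suffice here matches the paper's motivation for the stronger assumption.
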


\begin{proof}
First, we notice that 
\begin{align*}
    \ve^t&=\vm^t - \nabla f(\vx^{t-1}) + \nabla h(\vx^{t-1}) \\&= (1-a)\vm^{t-1} + a \vg_{f-h}(\vx^{t-1},\xi^{t-1}_f) \\&+ (1 - a)\Big(\vg_{f-h}(\vx^{t-1},\xi^{t-1}_{f-h}) - \vg_{f-h}(\vx^{t-2},\xi^{t-1}_{f-h})\Big) - \nabla f(\vx^{t-1}) + \nabla h(\vx^{t-1})\\
    &=(1-a)\ve^{t-1} + a (\vg_{f-h}(\vx^{t-1},\xi^{t-1}_{f-h}) - \nabla f(\vx^{t-1}) + \nabla h(\vx^{t-1}) ) \\&+ (1 - a)\big(\vg_{f-h}(\vx^{t-1},\xi^{t-1}_{f-h}) - \nabla f(\vx^{t-1}) + \nabla h(\vx^{t-1}) - \vg_{f-h}(\vx^{t-2},\xi^{t-1}_{f-h}) \\&+ \nabla f(\vx^{t-2}) - \nabla h(\vx^{t-2})\big)
\end{align*}
Notice that $\ve^{t-1}$ is independent of the rest of the formulae which is itself centered (has a mean equal to zero), so :
$$E^t
    \leq (1-a)^2 E^{t-1}+ 2 a^2 \sigma_{f-h}^2 + 2 \delta^2\Delta^{t-1}\, .$$
Now for $\eta\leq \frac{1}{5K\delta}$, we can use Lemma~\ref{BoundDelta} and we get
\begin{align*}
    E^t
    &\leq (1-a) E^{t-1}+ 2 a^2 \sigma_{f-h}^2 + 2 \delta^2\big(36 K^2 \eta^2 E^t + 18 K^2 \eta^2 G^t + 3K\eta^2\sigma_h^2\big)\,\\
    &\leq (1-a + 74K^2\delta^2 \eta^2) E^{t-1}+ 2 a^2 \sigma_{f-h}^2 + 36 K^2 \delta^2\eta^2 G^{t-1} + 6K\delta^2\eta^2\sigma_h^2\,
\end{align*}
It is easy to verify that for $a\geq 144 K^2\delta^2\eta^2$, we have $74K^2\delta^2 \eta^2 \leq a/2$ which finishes the proof.
\end{proof}

Let's now derive the convergence rate of AUXMVR.

We have : $$\left\{
    \begin{array}{ll}
        \frac{\eta}{4}G^t \leq  \frac{F^{t-1} - F^t}{K} + 2 \eta E^t + (\frac{L}{2}+ 8\delta)\eta^2\sigma_h^2\, , \\
        E^t \leq (1 - \frac{a}{2})E^{t-1} + 36 K^2\delta^2\eta^2 G^{t-1} + 2 a^2 \sigma_{f-h}^2 + 6K\delta^2\eta^2\sigma_h^2\, .
    \end{array}
\right.$$
 We will add to both sides of the first inequality the quantity $\frac{4\eta}{a}E^t$.
 
 So : $$\frac{\eta}{4}G^t + \frac{4\eta}{a}E^t\leq  \frac{F^{t-1} - F^t}{K} + 2 \eta E^t  + (\frac{L}{2}+ 8\delta)\eta^2\sigma_h^2 + \frac{4\eta}{a}\Big((1 - \frac{a}{2})E^{t-1} + 36 K^2\delta^2\eta^2 G^{t-1} + 2 a^2 \sigma_{f-h}^2 + 6K\delta^2\eta^2\sigma_h^2\Big)\, ,$$

 Let's define the potential $\Phi^t = \frac{F^t}{K} + (\frac{4\eta}{a} - 2\eta)E^t \leq \frac{F^t}{K} + \frac{4\eta}{a}E^t$.

 Then 
$$\frac{\eta}{4}G^t \leq   \Phi^{t-1} - \Phi^t  + (\frac{L}{2}+ 8\delta)\eta^2\sigma_h^2 + \frac{144 K^2\delta^2\eta^3}{a} G^{t-1} + 8 a\eta \sigma_{f-h}^2 + \frac{24K\delta^2\eta^3}{a}\sigma_h^2\, ,$$

for $a\geq 1152K^2\delta^2\eta^2$, we have $\frac{144 K^2\delta^2\eta^3}{a}\leq \eta/8$, which means 
$$\frac{\eta}{4}G^t - \frac{\eta}{8}G^{t-1} \leq   \Phi^{t-1} - \Phi^t  + (\frac{L}{2}+ 8\delta)\eta^2\sigma_h^2 + 8 a\eta \sigma_{f-h}^2 + \frac{24K\delta^2\eta^3}{a}\sigma_h^2\, ,$$

Summing the last inequality over $t$ gives :
\begin{align*}
    \frac{1}{8T}\sum_{t=1}^T G^t &\leq \frac{\Phi^{0}}{\eta T} + 8 a \sigma_{f-h}^2  + (\frac{L}{2}+ 8\delta)\eta\sigma_h^2 + \frac{24K\delta^2\eta^2}{a}\sigma_h^2\, ,\\
    &\leq \frac{F^{0}}{\eta KT} + \frac{4}{a T}E^0   + 8 a \sigma_{f-h}^2  + (\frac{L}{2}+ 8\delta)\eta\sigma_h^2 + \frac{24K\delta^2\eta^2}{a}\sigma_h^2\, ,
    \intertext{If we use a batch $T$ times larger at the beginning, we can ensure $E^0\leq \frac{\sigma_{f-h}^2}{T}$, so: }
    \frac{1}{8T}\sum_{t=1}^T G^t &\leq\frac{F^{0}}{\eta KT} + \frac{4\sigma_{f-h}^2}{a T^2} + 8 a \sigma_{f-h}^2  + (\frac{L}{2}+ 8\delta)\eta\sigma_h^2 + \frac{24K\delta^2\eta^2}{a}\sigma_h^2\, ,
    \intertext{Now taking $a =\max(\frac{1}{T}, 1152 \delta^2 K^2\eta^2) $, we get :}
    \frac{1}{8T}\sum_{t=1}^T G^t &\leq\frac{F^{0}}{\eta KT} + \frac{4\sigma_{f-h}^2}{T} + \frac{8\sigma_{f-h}^2}{T} + 9216 \delta^2\eta^2K^2 \sigma_{f-h}^2 + (\frac{L}{2}+ 8\delta)\eta\sigma_h^2 + \frac{\sigma_h^2}{48 K}\, ,
\end{align*}
Taking into account all the conditions on $\eta$ that were necessary, we can take :
$$\eta = \min\Big(\frac{1}{L},\frac{1}{192\delta K}, \frac{1}{K}\Big(\frac{F^0}{18432\delta^2T\sigma_{f-h}^2}\Big)^{1/3},\sqrt{\frac{F^0}{KT(L/2 + 8\delta K)}}\Big)\, .$$
This choice gives us the rate :
$$\frac{1}{8T}\sum_{t=1}^T G^t \leq 40 \Big(\frac{\delta F^0\sigma_{f-h}}{T}\Big)^{2/3} + 2\sqrt{\frac{(L/2 + 8\delta )F^0}{KT}} +  \frac{(L + 192 \delta K)F^0}{KT} + \frac{8\sigma_{f-h}^2}{T} + \frac{\sigma_h^2}{48 K}\, .$$

\subsection{Generalization to multiple decentralized helpers.}\label{AppendixDecentralized}

We consider now the case where we have $N$ helpers: $h_1, \ldots, h_N$. This case can be easily solved by merging all the helpers into one helper $h = \frac{1}{N} \sum_{i=1}^N h_i$ for example (it is easy to see that if each $h_i$ is $\delta_i$-BHD from $f$ that their average $h$ would be $\frac{1}{N} \sum_{i=1}^N \delta_i$-BHD from f ). However, this is not possible if the helpers are decentralized (are not in the same place and cannot be made to be for privacy reasons for example). For this reason, we consider a Federated version of our optimization problem in the presence of auxiliary decentralized information. 

In this case, we consider that all functions $h_i$ are such that :
$$\forall \vx,\xi\,:\,\norm{\nabla^2 f(\vx) - \nabla^2 h_i(\vx)}\leq \delta\, .$$

We will also need an additional assumption on $f$ :
\begin{assumption}\label{A5}\textbf{(Weak convexity.)} $f$ is $\delta$- weakly convex i.e. $\vx\mapsto f(\vx) + \delta\norm{\vx}^2$ is convex.
\end{assumption}

\begin{lemma}\label{averaging}
Under Assumption~\ref{A5} the following is true :$$\forall N \forall \vx,\vx_1,\ldots, \vx_N \forall \alpha\geq \delta\, : \, f(\frac{1}{N}\sum_{i=1}^N\vx_i) +\alpha\norm{\frac{1}{N}\sum_{i=1}^N\vx_i - \vx}^2 \leq \frac{1}{N}\sum_{i=1}^N \Big( f(\vx_i) +\alpha\norm{\vx_i - \vx}^2 \Big)\, .$$
\end{lemma}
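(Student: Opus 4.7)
The plan is to show that the function $g(\vz) := f(\vz) + \alpha\norm{\vz - \vx}^2$ is convex in $\vz$ (for fixed $\vx$) whenever $\alpha \geq \delta$, and then apply Jensen's inequality to the average $\frac{1}{N}\sum_{i=1}^N \vx_i$.

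First I would expand the quadratic: $\alpha\norm{\vz - \vx}^2 = \alpha\norm{\vz}^2 - 2\alpha\vz^\top\vx + \alpha\norm{\vx}^2$. Then I would rewrite $g$ by inserting $\pm \delta\norm{\vz}^2$ so as to isolate the weakly-convex piece,
$$
g(\vz) = \bigl[f(\vz) + \delta\norm{\vz}^2\bigr] + (\alpha - \delta)\norm{\vz}^2 - 2\alpha\vz^\top\vx + \alpha\norm{\vx}^2\,.
$$
The bracketed term is convex by Assumption~\ref{A5}, the term $(\alpha-\delta)\norm{\vz}^2$ is convex since $\alpha \geq \delta$, and the remaining terms in $\vz$ are affine. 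Hence $g$ is a sum of convex functions and is therefore convex.

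Applying Jensen's inequality to $g$ at the points $\vx_1,\ldots,\vx_N$ with uniform weights yields
$$
g\Bigl(\tfrac{1}{N}\sum_{i=1}^N \vx_i\Bigr) \;\leq\; \tfrac{1}{N}\sum_{i=1}^N g(\vx_i)\,,
$$
which is exactly the claimed inequality after substituting back the definition of $g$. There is no real obstacle here: the only subtle point is making sure the additive splitting of $\alpha\norm{\vz-\vx}^2$ aligns with the weak-convexity constant $\delta$ in Assumption~\ref{A5}, and that $\vx$ enters only through affine and constant terms in $\vz$ so that convexity in $\vz$ is preserved regardless of $\vx$.
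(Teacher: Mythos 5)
Your proof is correct: the paper states Lemma~\ref{averaging} without proof, and your argument --- splitting $f(\vz)+\alpha\norm{\vz-\vx}^2$ into the convex part $f(\vz)+\delta\norm{\vz}^2$ guaranteed by Assumption~\ref{A5}, the convex remainder $(\alpha-\delta)\norm{\vz}^2$, and terms affine in $\vz$, then applying Jensen --- is exactly the standard argument the lemma implicitly relies on. No gaps.
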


\textbf{About the need for Assumption~\ref{A5}.} Assumption~\ref{A5} becomes strong for $\delta$ small which is not good, as this should be the easiest case. however, we would like to point out that we only need Assumption~\ref{A5} to deal with the averaging that we perform to construct the new state $\vx^t$. In the case where we sample each time one and only one helper (i.e. $S = 1$), we don't need such an assumption. This assumption was made in the context of Federated Learning for example in \citep{MiMe}.

\subsubsection{Decentralized momentum version}

As in \ref{App Momentum} we start from $\vx^{t-1}$, we sample (randomly) a set $S^t$ of $S$ helpers, we sample $\xi_f^t$ , compute $\vg_f(\vx^{t-1},\xi_f^t)$, share it with all of the helpers for $h_i, i\in S^t$, which each sample $\xi_h^t$ (potentially correlated with $\xi_f^t$) and then update $\vm^t_i$, then perform $K$ steps of $\vy^t_{i,k} = \vy^t_{i,k-1} - \eta \vd^t_{i,k}$, once this finishes $\vy^t_{i,K}$ is sent back to $f$ which then does $\vx^t = \frac{1}{S}\sum_{i\in S^t}\vy^t_{i,K}$.

For each $i$ we will denote $E_i^t = \E[\norm{\vm_i^t - \nabla f(\vx^{t-1}) + \nabla h_i(\vx^{t-1})}^2]$ and $E^t = \frac{1}{S}\sum_{i\in S^t} E_i^t$.

\begin{framedtheorem}
Under assumptions A\ref{A1}, \ref{A2},\ref{A3} (and assumption \ref{A5} if $S>1$). For $a = 36 \delta K\eta$ and 
$\eta = \min(\frac{1}{L},\frac{1}{144\delta K}, \sqrt{\frac{\Tilde{F}}{128L\beta KT\sigma_f^2}})\, .$
This choice gives us the rate :
$$\frac{1}{8KST}\sum_{t=1}^T\sum_{k=1}^{K-1}\sum_{i\in S^t} \E\big[\norm{\nabla f(\vy^t_{i,k})}^2\big] \leq 24 \sqrt{\frac{L\beta\Tilde{F}\sigma_f^2}{T}} + \frac{(L + 192 \delta K)\Tilde{F}}{KT}+ \frac{\sigma_h^2}{6K}\, .$$
where $\Tilde{F} = F^{0} + \frac{E^0}{8 \delta}$ and $\beta = \frac{\delta}{L}\bigg(\frac{\sigma_{f-h}^2}{\sigma_f^2} + \frac{1}{18K}\frac{\sigma_h^2}{\sigma_f^2}\bigg) + \frac{1}{288K}\frac{\sigma_h^2}{\sigma_f^2}$, $F^0 = f(\vx^0) - f^\star$ and $E^0 = \E[\norm{\vm^0 - \nabla f(\vx^0) + \nabla h(\vx^0)}^2]$.

Furthermore, if we use a batch-size  $T$ times bigger for computing an estimate $\vm^0_i$ of $\nabla f(\vx^0) - \nabla h_i(\vx_0)$, then by taking $a = \max(\frac{1}{T},36\delta K\eta)$ and replacing $\Tilde{F}$ by $F^0$ in the expression of $\eta$, we get :
$$\eta = \min(\frac{1}{L},\frac{1}{192\delta K}, \sqrt{\frac{F^0}{144L\beta K^2T\sigma_f^2}})\, .$$
we get the following :
$$\frac{1}{8KST}\sum_{t=1}^T\sum_{k=1}^{K-1}\sum_{i\in S^t} \E\big[\norm{\nabla f(\vy^t_{i,k})}^2\big] \leq 24 \sqrt{\frac{L\beta F^0\sigma_f^2}{T}} + \frac{(L + 192 \delta K)F^0}{KT} + \frac{8\sigma_{f-h}^2}{T} + \frac{\sigma_h^2}{6K}\, .$$
\end{framedtheorem}

\begin{proof}
The proof follows the same lines as the proof of in \ref{App Momentum}.\\ Two changes should be made to the proof: $G^t$ needs to be updated to 
 $G^t = \frac{1}{SK}\sum_{i\in S^t,k} \E\big[\norm{\nabla f(\vy^t_k)}^2\big]\,$ in both Lemmas \ref{BoundDelta} and  \ref{descentMomentum} 

In fact, in this case $\vx^t =\frac{1}{S}\sum_{i\in S^t} \vy^t_{i,K}$, which means using convexity of the squared norm :  
\begin{align*}
    \Delta^t &\leq \frac{1}{S}\sum_{i\in S^t}\Delta^t_{i,K} (:=\E\big[\norm{\vy^t_{i,K} - \vx^{t-1}}^2\big])\\ &\leq  \frac{1}{S}\sum_{i\in S^t}\sum_k (1+\frac{1}{K})^{K-k}\Big(12 K\eta^2 E_i^t + 6K\eta^2\E[\norm{\nabla f(\vy^t_{i,k-1})}^2] + \eta^2\sigma_h^2\Big)\\
    &\leq 36 K^2\eta^2 E^t + 18 K^2\eta^2 \frac{1}{KS}\sum_{i\in S^t}\sum_k \E\big[\norm{\nabla f(\vy^t_{i,k})}^2\big] + 3K\eta^2\sigma_h^2\\&=36 K^2\eta^2 E^t + 18 K^2\eta^2 G^t+ 3K\eta^2\sigma_h^2\, .
\end{align*}

And in the descent lemma (That modifies Lemma\ref{descentMomentum}) we will have :
\begin{align*}
    \forall i \, : \,\frac{K\eta}{4}\frac{1}{K}\sum_k \E\big[\norm{\nabla f(\vy^t_{i,k})}^2\big] &\leq \E\big[f(\vy^t_{i,0}) + \delta (1 + \frac{2}{K})^{K}\Delta^t_{i,0}\big] - \Big(\E\big[f(\vy^t_{i,K}) + \delta \Delta^t_{i,K}\big]\Big) \\&+ 2 \eta K E^t + (\frac{L}{2}+ 8\delta)\eta^2\sigma_h^2\, .
\end{align*}

Where $\Delta^t_{i,0} = 0$ and using Lemma\ref{averaging} we have :
$$F^t + \delta \Delta^t \leq \frac{1}{S}\sum_{i\in S^t}\Big(\E\big[f(\vy^t_{i,K}) + \delta \Delta^t_{i,K}\big]\Big)\, .$$
Using the last inequality it is easy to get :
$$\frac{K\eta}{4}G^t \leq F^{t-1} - F^{t} - \delta \Delta^t + 2 \eta K E^t + (\frac{L}{2}+ 8\delta)\eta^2\sigma_h^2\, .$$
All the rest is the same.
\end{proof}

\subsubsection{Decentralized MVR version}
As in \ref{app MVR} we start from $\vx^{t-1}$, we sample (randomly) a set $S^t$ of $S$ helpers, we sample $\xi_f^t$ , compute $\vg_f(\vx^{t-1},\xi_f^t)$, share it with all of the helpers for $h_i, i\in S^t$, which each sample $\xi_h^t$ (potentially correlated with $\xi_f^t$) and then update $\vm^t_i$, then perform $K$ steps of $\vy^t_{i,k} = \vy^t_{i,k-1} - \eta \vd^t_{i,k}$, once this finishes $\vy^t_{i,K}$ is sent back to $f$ which then does $\vx^t = \frac{1}{S}\sum_{i\in S^t}\vy^t_{i,K}$.

We can prove the following theorem under the same changes as in the momentum case.

\begin{framedtheorem}
Under assumptions A\ref{A1}, \ref{A2},\ref{A3} (and assumption \ref{A5} if $S>1$). For momentum initialization $\vm_i^0$ such that $E_i^0\leq \sigma_{f-h}^2/T$, for $a =\max(\frac{1}{T}, 1152 \delta^2 K^2\eta^2)$,  and 
$\eta = \min\Big(\frac{1}{L},\frac{1}{192\delta K}, \frac{1}{K}\Big(\frac{F^0}{18432\delta^2T\sigma_{f-h}^2}\Big)^{1/3},\sqrt{\frac{F^0}{KT(L/2 + 8\delta K)}}\Big)\, ,$
we get:
\begin{multline*}
    \frac{1}{8KST}\sum_{t=1}^T\sum_{i\in S^t}\sum_{k=1}^{K-1} \E\big[\norm{\nabla f(\vy^t_{i,k})}^2\big] \leq  40 \Big(\frac{\delta F^0\sigma_{f-h}}{T}\Big)^{2/3} + 2\sqrt{\frac{(L/2 + 8\delta )F^0}{KT}} +  \frac{(L + 192 \delta K)F^0}{KT} \\ + \frac{8\sigma_{f-h}^2}{T} + \frac{\sigma_h^2}{48 K}\, .
\end{multline*}
Where $F^0 = f(\vx^0) - f^\star$ and $E_i^0 = \norm{\vm_i^0 - (\nabla f(\vx^0) - \nabla h_i(\vx^0))}^2$.
\end{framedtheorem}

\end{document}